\documentclass{article}

\PassOptionsToPackage{bookmarks=true}{hyperref} 
\PassOptionsToPackage{round}{natbib}
\usepackage[preprint]{neurips_2026}
\usepackage[utf8]{inputenc}
\usepackage[T1]{fontenc}
\usepackage{amsmath}
\usepackage{amssymb}
\usepackage{amsthm}
\usepackage{natbib}
\usepackage{subcaption}
\usepackage{caption}
\usepackage{algorithm2e}
\RestyleAlgo{ruled}
\usepackage{dsfont}
\usepackage{float}
\usepackage{comment}
\usepackage{graphicx}
\usepackage{booktabs}
\usepackage{multirow}

\theoremstyle{remark}
\newtheorem{remark}{Remark}
\theoremstyle{plain}
\newtheorem{lemma}{Lemma}

\DeclareMathOperator*{\argminC}{\arg\min}

\title{Change Detection in Probability Flow ODE:\\ Online Testing in Diffusion Latent Spaces}

\author{%
  Artem Kraevskiy \\
  National Research University Higher School of Economics \\
  Moscow, Russia \\
  \texttt{aakraevskiy@hse.ru} \\
  \And
  Artem Prokhorov \\
  The University of Sydney Business School \& CIREQ \& CEBDA \\ Sydney, Australia \\
  \texttt{artem.prokhorov@sydney.edu.au}
}

\begin{document}

\maketitle

\begin{abstract}
A rapidly growing range of sequential data tasks, such as identifying trend reversals in financial markets, auto-segmenting video and audio recordings, detecting changes in movement direction from motion sensors cannot be fully addressed without detection of distributional shifts in time-ordered data. We consider a sequential change-point detection problem where the conditional density  switches at an unknown time, yet neither the pre- nor post-change distribution admits a closed-form. Classical likelihood-ratio statistics are inapplicable in this settings.

A conditional diffusion model, trained on pre-change-point data with a frozen context encoder, defines a deterministic bijection via the probability flow ODE. Pre-change observations are mapped onto standard Gaussian latent variables. Post-change observations, processed through the same frozen map, deviate from this reference. We employ the Maximum Mean Discrepancy as the test statistic, derive closed-form expressions for its components under the Gaussian null, and establish its asymptotic distribution as a degenerate U-statistic. Afterwards we apply an online detection procedure of Shiryaev--Roberts to the resulting statistic with exact threshold calibration.

The method detects arbitrary distributional shifts, including covariance rotations and higher-order structural breaks, without parametric assumptions on either regime.
\end{abstract}

\textbf{Keywords:} Change-point, diffusion models, PF-ODE, U-statistic, Shiryaev--Roberts statistic, optimal stopping rules, concept drift.

\newpage
\section{Problem Formulation}
\label{sec:problem}

Let $(X_t)_{t \geq 1}$, $X_t \in \mathbb{R}^d$, be an observed sequence and let
$\mathcal{F}_t = \sigma(X_1,\ldots,X_t)$ denote its natural filtration.
Fix a look-back window $l \geq 1$ and let $\mathbf{h}_{t-1} \in \mathbb{R}^m$
be a summary of the history $X_{t-l},\ldots,X_{t-1}$ produced by a causal
encoder (e.g.\ GRU, Transformer encoder, MLP, etc.), with $\mathbf{h}_0 = 0$.

Let $\theta \in \mathbb{N}$ be an unobservable change-point, independent of
$\mathcal{F}_\infty$ and distributed geometrically,
$\mathbb{P}(\theta = k) = (1-\rho)^{k-1}\rho$, $\rho \in (0,1)$.
The conditional distribution of $X_t$ given $\mathcal{F}_{t-1}$ is
\begin{equation}
  X_t \mid \mathcal{F}_{t-1}
  \;\sim\;
  \begin{cases}
    p_0(\,\cdot \mid \mathbf{h}_{t-1}) & t < \theta,\\[2pt]
    p_1(\,\cdot \mid \mathbf{h}_{t-1}) & t \geq \theta,
  \end{cases}
\end{equation}
where $p_0(\cdot\mid\mathbf{h})$ and $p_1(\cdot\mid\mathbf{h})$ are families of
absolutely continuous conditional densities with $p_0 \neq p_1$.
Neither density is assumed to have a known analytic form.

\cite{Shiryaev} and \cite{Roberts} showed that, for the Bayes risk
$$\mathcal{R}(\tau) = c\,\mathbb{E}[(\tau-\theta)^+] + \mathbb{P}(\tau<\theta), \qquad c>0,$$
trading off detection delay against the false-alarm probability, the rule minimising $\mathcal{R}(\tau)$ over stopping times $\tau$ has the form:
$\tau^* = \inf\{t : \pi_t \geq \pi^*\}$, where
$\pi_t = \mathbb{P}(\theta \leq t \mid \mathcal{F}_t)$ is the posterior change-point probability and $\pi^*$ is determined by a free-boundary problem (\cite{Shiryaev_Peskir}).
Computing $\pi_t$ requires the conditional log-likelihood ratio
$\log \bigg[\dfrac{p_1(X_t\mid\mathbf{h}_{t-1})}{p_0(X_t\mid\mathbf{h}_{t-1})} \bigg]$,
which is intractable whenever the densities $p_0$ and $p_1$ are unknown.

Another crucial factor that makes the direct application of the Shiryaev-Roberts approach impractical is the curse of dimensionality. Classical approaches to multivariate PDF estimation (kNN, tree-based algorithms, KDE, etc.) suffer even when the dimensionality is moderately large. Furthermore, it may be difficult to consider various domain-specific aspects for empirical distributions such as heavy-tails and tail-dependence in financial time series or sparsity of images in video frame sequences.

We address this problem by applying a trained Probability Flow ODE (PF-ODE) generative model~\citep{SongEtAl2021SDE} -- trained by denoising score matching against the conditional Stein score (Appendix~\ref{app:score-training}) -- as a conditional encoder that, provided the model is well trained, deterministically maps\footnote{The stochastic reverse process of models such as DDPM~\citep{HoEtAl2020DDPM} is a Markov kernel, not a deterministic bijection, and therefore cannot serve this role; see Appendix~\ref{app:bijection} for why injectivity of the encoder is essential to the argument below.} a complex, unknown multivariate PDF in the original data domain to a latent space following $\mathcal{N}(0, I)$, and back again. If the current data $X_t$ comes from the same distribution as its previous values (aggregated into the embedding $\mathbf{h}_{t-1}$) then the corresponding latent variable $z_t$ will follow this standard normal distribution. If the distribution has changed,  the projection into latent space will no longer follow $\mathcal{N}(0, I)$.

In this sense, we want to test the hypothesis that the distribution of latent variables is $\mathcal{N}(0, I)$. The moment when we reject this hypothesis would be considered a change-point. 

The paper proceeds as follows. First, we derive a formula for maximum mean discrepancy (MMD) against a Gaussian alternative and find its asymptotic distribution. Second, we apply the Shiryaev-Roberts approach and describe an algorithm for finding the optimal threshold $\pi^{*}$. Finally, we apply the proposed approach to synthetic and real-world data.

\section{Maximum Mean Discrepancy}
\label{sec:mmd}

\subsection{Definition and RKHS characterisation}

Let $k : \mathbb{R}^d \times \mathbb{R}^d \to \mathbb{R}$ be a symmetric positive-definite
kernel and $\mathcal{H}_k$ the associated reproducing kernel Hilbert space (RKHS) with
inner product $\langle \cdot, \cdot \rangle_{\mathcal{H}}$.
For a distribution $P$ on $\mathbb{R}^d$, the Riesz representation theorem~\citep{Aronszajn1950,SmolaEtAl2007}
in $\mathcal{H}_k$ guarantees a unique element $\mu_P \in \mathcal{H}_k$
(the kernel mean embedding), satisfying:
$$\mathbb{E}_{X \sim P}[f(X)] = \langle f,\,\mu_P \rangle_{\mathcal{H}},
\qquad \mu_P(\cdot) = \mathbb{E}_{X \sim P}[k(\cdot, X)]$$

The \emph{Maximum Mean Discrepancy} (MMD) between two distributions $P$ and $Q$ is
the distance between their mean embeddings in $\mathcal{H}_k$:
$$\mathrm{MMD}(P, Q;\, k) = \sup_{\|f\|_{\mathcal{H}} \leq 1}
  \bigl(\mathbb{E}_{X \sim P}[f(X)] - \mathbb{E}_{Y \sim Q}[f(Y)]\bigr)
= \|\mu_P - \mu_Q\|_{\mathcal{H}}$$

Expanding the squared norm using the reproducing property $\langle k(\cdot,x), k(\cdot,y)\rangle_\mathcal{H} = k(x,y)$:
$$\mathrm{MMD}^2(P, Q) = \mathbb{E}_{X,X' \sim P}[k(X, X')]
- 2\,\mathbb{E}_{\substack{X \sim P \\ Y \sim Q}}[k(X, Y)]
+ \mathbb{E}_{Y,Y' \sim Q}[k(Y, Y')]$$

A kernel $k$ is called characteristic if $\mu_P = \mu_Q \Rightarrow P = Q$,
so that $\mathrm{MMD}(P,Q) = 0 \Leftrightarrow P = Q$ and MMD is a proper metric on the space of probability measures.

\subsection{Estimator against a standard normal reference distribution}

In our setting, the reference distribution $Q = \mathcal{N}(0, I)$ is known exactly.
Given a window of encoded observations $Z_1, \ldots, Z_w \in \mathbb{R}^d$, we estimate $\mathrm{MMD}^2(\hat{P}_w, \mathcal{N}(0,I))$ by decomposing it into three expectations:
$$\widehat{\mathrm{MMD}}^2 =
  \underbrace{\frac{1}{w^2}\sum_{i=1}^{w}\sum_{j=1}^{w} k(Z_i, Z_j)}_{A}
  - \underbrace{\frac{2}{w}\sum_{i=1}^{w} \mathbb{E}_{Y \sim Q}[k(Z_i, Y)]}_{ B}
  + \underbrace{\mathbb{E}_{Y,Y' \sim Q}[k(Y, Y')]}_{C}$$

This estimator includes the diagonal $k(Z_i, Z_i)$ in Term A, making it biased. Write $\mathbb{E}_0[\cdot]$ for expectation under $H_0$, i.e.\ with $Z_i \overset{\text{iid}}{\sim} P_0 = Q$; then $$\mathbb{E}_0\bigl[\widehat{\mathrm{MMD}}^2\bigr] = \frac{k_0 - C}{w}, \qquad k_0 = k(z,z)$$ which is strictly positive whenever $k_0 > C$.\footnote{Derivation of this bias is given in Appendix~\ref{app:BC-derivation}, ``Bias of the diagonal-included estimator.''} The unbiased U-statistic estimator is the one with the removed diagonal elements: $$\widehat{\mathrm{MMD}}^2_u = \frac{1}{w(w-1)}\sum_{i \neq j} k(Z_i, Z_j)- \frac{2}{w}\sum_{i=1}^{w}\mathbb{E}_Y[k(Z_i, Y)] + C,$$
where $ \mathbb{E}_0\bigl[\widehat{\mathrm{MMD}}^2_u\bigr] = 0$.\footnote{The proof that this expectation is exactly zero is given in Appendix~\ref{app:mmd-asymptotics}, Step 1.}

We will be using the Radial Basis Function (RBF) kernel $k(x,y) = \exp(-\|x-y\|^2/2\sigma^2)$ as it is characteristic on $\mathbb{R}^d$
for any $\sigma > 0$ \citep{SriperumbudurEtAl2010}.

For $k(x,y) = \exp\!\bigl(-\|x-y\|^2/2\sigma^2\bigr)$ and $Q = \mathcal{N}(0,I)$, both terms reduce to products of one-dimensional Gaussian integrals, since the kernel factorises over coordinates and $Q$ is Gaussian.\footnote{Term $C$ follows from the moment generating function of $\|Y-Y'\|^2$ under $Y-Y'\sim\mathcal{N}(0,2I)$; Term $B(z)$ follows from completing the square in the cross term $(z_l-y)^2$ before integrating out $y$. The full derivation of both is given in Appendix~\ref{app:BC-derivation}.} 

$${C = \left(\frac{\sigma^2}{\sigma^2 + 2}\right)^{d/2}}
\qquad\qquad
{B(z) = \left(\frac{\sigma^2}{\sigma^2 + 1}\right)^{d/2}
  \exp\!\left(-\frac{\|z\|^2}{2(\sigma^2 + 1)}\right)}$$

\subsubsection*{Assembled estimator}

Substituting the results into the formula, the MMD² against $\mathcal{N}(0,I)$ is:
$$\widehat{\mathrm{MMD}}^2 =
\frac{1}{w^2}\sum_{i,j} e^{-\|Z_i-Z_j\|^2/2\sigma^2}
\;-\; \frac{2}{w}\left(\frac{\sigma^2}{\sigma^2+1}\right)^{d/2}
  \!\sum_{i=1}^{w} e^{-\|Z_i\|^2/2(\sigma^2+1)}
\;+\; \left(\frac{\sigma^2}{\sigma^2+2}\right)^{d/2}$$

Only Term A requires pairwise distances and scales as $O(w^2 d)$; Terms B and C
are $O(wd)$ and $O(1)$ respectively. Under $H_0$, the finite-sample bias is:
$$\mathbb{E}_0\bigl[\widehat{\mathrm{MMD}}^2\bigr]
= \frac{1-C}{w} = \frac{1}{w}\!\left[1 - \left(\frac{\sigma^2}{\sigma^2+2}\right)^{d/2}\right]
\;\xrightarrow{w\to\infty}\; 0$$

\subsection{Asymptotic null distribution}
\label{sec:null_dist}

Under $H_0$ ($Z \sim P_0 = Q$), the centred kernel $\tilde{k}(x,y) = k(x,y) - B(x) - B(y) + C$ has zero conditional mean and $\mathbb{E}_{Z'\sim P_0}[\tilde k(z,Z')] = 0$ for all $z$, so the unbiased estimator $\widehat{\mathrm{MMD}}^2_u$ is a first-order degenerate U-statistic: the usual $\sqrt{w}$-CLT term vanishes, and the correct normalisation is $w$, not $\sqrt{w}$.\footnote{Verification of the degeneracy condition is given in Appendix~\ref{app:mmd-asymptotics}, Step 1.} Expanding $\tilde k$ via Mercer's theorem, $\tilde k(x,y) = \sum_l \lambda_l\,\phi_l(x)\phi_l(y)$, the rescaled statistic converges to a weighted sum of centred $\chi^2_1$ variables (\citealt{Hall1984}; \citealt[Thm.~12]{GrettonEtAl2012}):\footnote{The spectral decomposition and the mode-by-mode CLT argument are given in Appendix~\ref{app:mmd-asymptotics}, Step 2.}

$${w\cdot\widehat{\mathrm{MMD}}^2_u \xrightarrow{d}
\sum_{l=1}^{\infty} \lambda_l\,(Z_l^2 - 1),
\qquad Z_l \overset{\mathrm{iid}}{\sim} \mathcal{N}(0,1)}.$$

A generalised $\chi^2$ law has no closed-form CDF and, for the RBF kernel under $P_0 = \mathcal{N}(0,I)$, the eigenfunctions are weighted Hermite functions with geometrically decaying eigenvalues, $\lambda_m \propto r^m$, where $r = \dfrac{\sigma^2+2-\sigma\sqrt{\sigma^2+4}}{2} \in (0,1)$, so the limit is dominated by the first few modes.\footnote{Derivation of the RBF eigenvalues is given in Appendix~\ref{app:mmd-asymptotics}, Step 3, which also tabulates $C$, $B$, and the resulting null limit for the linear, polynomial, Laplace and IMQ kernels (Table~\ref{tab:kernels}).} 

The choice of kernel determines which distributional changes are detectable. The linear kernel sees only mean shifts, degree-$p$ polynomials see moments up to order $p$, while RBF, Laplace and Inverse Multiquadric (IMQ) are characteristic and detect any $P \neq Q$ (Table~\ref{tab:kernels}).

\subsection{Asymptotic alternative distribution}

Under the alternative ($Z \sim P_1 \neq Q$, with $\delta^2 := \mathrm{MMD}^2(P_1,Q) > 0$), degeneracy breaks down. The \citet{Hoeffding1948} decomposition of $\tilde k$ has a first-order term $h_1(z) = \mathbb{E}_{Z'\sim P_1}[\tilde k(z,Z')] - \delta^2$ that is identically zero under $H_0$ but  nonzero under $H_1$ in general, since $\mathbb{E}_{Z'\sim P_1}[k(z,Z')] \neq B(z)$ whenever $P_1 \neq Q$.\footnote{The full Hoeffding decomposition, and the verification that $h_1 \equiv 0$ under $H_0$ but not under $H_1$, are given in Appendix~\ref{app:mmd-asymptotics}, Alternative case.} The resulting linear term is a sum of $w$ i.i.d.\ mean-zero contributions, $O_p(w^{-1/2})$ by the ordinary CLT, while the remaining degenerate second-order term is $O_p(w^{-1})$ and asymptotically negligible by Slutsky's theorem.\footnote{The order comparison between the linear and degenerate terms is given in Appendix~\ref{app:mmd-asymptotics}, Alternative case.} The estimator therefore reverts to standard $\sqrt{w}$-rate asymptotic normality around the true value:
$$\sqrt{w}\!\left(\widehat{\mathrm{MMD}}^2_u - \delta^2\right) \xrightarrow{d} \mathcal{N}\!\left(0,4\sigma_1^2\right),
\qquad \sigma_1^2 = \mathrm{Var}_{Z\sim P_1}\!\left[\mathbb{E}_{Z'\sim P_1}[\tilde{k}(Z,Z')]\right]$$
Rescaling to the same $w$-normalisation used for the null statistic gives us the following result:

$${w\cdot\widehat{\mathrm{MMD}}^2_u \sim 
\mathcal N\!\left(w\delta^2, 4w\sigma_1^2\right)}$$

a mean growing linearly in $w$ against the null's $O_p(1)$ generalised-$\chi^2$ behaviour (Appendix~\ref{app:mmd-asymptotics}, Step 2). We use this distinct behaviour to force detection power to go to one as the window grows.

\section{Shiryaev-Roberts Statistic for MMD-Based Change-Point Detection}
\subsection{The Shiryaev-Roberts statistic}

Consider first the classical change-point detection setting. Let  $X_1, X_2, \ldots$ denote an i.i.d\ sequence from a known density $p_0$ before an unknown shift moment $\theta$ and from a known density $p_1 \neq p_0$ after $\theta$. The Shiryaev-Roberts (SR) statistic \citep{Shiryaev,Roberts} is defined by the recursion $$R_t = (1 + R_{t-1})\,\Lambda_t, \qquad R_0 = 0, \qquad \Lambda_t = \frac{p_1(X_t)}{p_0(X_t)},$$ which has the equivalent closed form
$$R_t =\sum_{k=1}^{t} \frac{p_1(X_k)\cdots p_1(X_t)}{p_0(X_k)\cdots p_0(X_t)}= \sum_{k=1}^{t}\prod_{j=k}^{t} \Lambda_j.$$
Each term is the likelihood ratio for the hypothesis``the change occurred at $k$'' against ``no change observed,'' so $R_t$ aggregates evidence for a change having occurred at any past time $k \leq t$, rather than committing to a single candidate $k$ as CUSUM does.

\subsection{The Shiryaev-Roberts recursion for mixture MMD likelihood ratio.}

Let
$$\tilde{S}_t \;:=\; \widehat{\mathrm{MMD}}^2_t - \mu_0,
\qquad
\mu_0 := \mathbb{E}_0\bigl[\widehat{\mathrm{MMD}}^2\bigr] = \frac{k_0-C}{w},$$
denote the bias-corrected window summary statistic and $\widehat{\mathrm{MMD}}^2_t$ the closed-form RBF estimator of Section~\ref{sec:mmd}.  $\widehat{\mathrm{MMD}}^2_t$ is computed on the $t$-th non-overlapping window of $w$ consecutive encoded observations $Z_{(t-1)w+1},\ldots,Z_{tw}$, and $\mu_0$ is its exact finite-sample null bias. Its null and alternative densities are the ones derived above.

By the independence result for the frozen-context, non-overlapping-window construction actually used online,\footnote{See Appendix~\ref{app:algorithm}, ``Design choices,'' for the argument.} the sequence $\tilde{S}_1, \tilde{S}_2, \ldots$ is i.i.d.\ within each regime. Its null density $p_0(\cdot) =: g_0(\cdot)$ is the density of $G_0$, the exact (finite-$w$) law of $\tilde{S}_t$ under $H_0$: as $w\to\infty$, $w\tilde{S}_t$ converges to the generalised-$\chi^2$ law $\sum_{l=1}^\infty\lambda_l(Z_l^2-1)$ of Section~\ref{sec:null_dist}, so $G_0$ is the finite-sample analogue of that law and similarly has no closed form. Thus, $g_0$ is evaluated numerically by Monte Carlo simulation of $\tilde{S}_t$ under the known null $\mathcal N(0,I)$ (Appendix~\ref{app:numerical-stability}). 

For its alternative density, we can average over the unknown signal
strength $\delta^2$ using the Tartakovsky-Spivak
exponential prior \citep{TartakovskySpivak2022,TartakovskyNikiforovBasseville2014}. This gives 
$\bar{p}_1$. 
Replacing
$\Lambda_t$ with $\Lambda_t^{\pi} = \bar{p}_1(\tilde{S}_t)/g_0(\tilde{S}_t)$ for
the generic, unavailable ratio $p_1(X_t)/p_0(X_t)$ gives
$$\Lambda_t^{\pi} =
\frac{\alpha\,
\exp\!\left(-\alpha\tilde{S}_t + \frac{1}{2}\alpha^2 v_1^2\right)
\Phi\!\left(\dfrac{\tilde{S}_t - \alpha v_1^2}{v_1}\right)}
{g_0(\tilde{S}_t)},$$
where the SR statistic is now indexed by windows rather than individual observations, with the equivalent form of $R_t$
carried over unchanged:

$${R_t = \bigl(1 + R_{t-1}\bigr)\,\Lambda_t^{\pi}, \qquad R_0 = 0,
\qquad \tau_A = \inf\{t \geq 1 : R_t \geq A\}}.$$

Replacing the true, unavailable likelihood ratio by
its Bayes average over $\delta^2$ turns $R_t$ into a generalised
likelihood-ratio SR statistic in the sense of
\citet{TartakovskyVeeravalli2005}. 
No online estimate of $\delta^2$ enters $\Lambda_t^\pi$ 
though the prior hyperparameters
$(\alpha, v_1)$ still have to be set.  

\subsection{Choosing $\alpha$ and $v_1$.}
There are two main approaches to hyperparameter selection. The first needs the researcher's decision. Under the exponential prior of Tartakovsky-Spivak, $\mathbb{E}_\pi[\delta^2] = 1/\alpha$. Thus, mixing $\alpha$ is a minimum-detectable-effect choice, that is the
smallest post-change signal that the mixture is tuned for. It follows the same logic as choosing an effect size in classical power analysis. 
The second method, used in Section~\ref{sec:experiments}, replaces this judgement with a plug-in estimate of $(\alpha, v_1)$ from a window of data. 
The construction and its limitations are
given in Appendix~\ref{app:pilot-calibration}. 

The two approaches are not mutually exclusive and 
and neither affects the false-alarm rate, which is governed entirely by the threshold calibration. 
A poorly chosen $(\alpha, v_1)$ only makes the detector slower to react to whatever shift eventually occurs.

\subsection{Threshold calibration}
\label{sec:calibration}

We consider two optimality criteria 
for the SR statistic $R_t$. 
One requires committing to an explicit cost trade-off between delay and false alarm and delivers an exactly optimal threshold for that trade-off, the other requires only an operational false-alarm budget and delivers an asymptotically optimal threshold. 

\paragraph{Approach A -- Bayes risk minimisation (Shiryaev--Peskir).}
Fixing a cost $c>0$ for the trade-off between delay and false alarms, the Bayes-type
risk of Section~\ref{sec:problem}, $\mathcal{R}(\tau) = c\,\mathbb{E}[(\tau-\theta)^+]
+ \mathbb{P}(\tau<\theta)$, can be expressed in terms of $R_n$ as 
$\Pi_n = R_n/(1+R_n) \in [0,1)$. This is the window-indexed analogue of
$\pi_t=\varphi_t/(1+\varphi_t)$ used throughout \citet{Shiryaev_Peskir}. It can be used an
optimal stopping problem for the Markov chain $(\Pi_n)_{n\geq0}$. The problem becomes
$$V(x) = \inf_{n \geq 0} \mathbb{E}_x\!\left[1-\Pi_n + c\sum_{k=0}^{n-1}\Pi_k\right],$$
and can be solved using their Wiener and Poisson disorder problems (\S22, \S24).\footnote{Full derivation in Appendix~\ref{app:calibration}, Approach A.}

This yields, for a given cost $c$, the threshold $b^{*}(c)$ 
that is exactly optimal for that specific trade-off, together with the minimal achievable risk $V$. For example, it traces out the
best possible delay/false-alarm frontier as $c$ varies. 

\paragraph{Approach B -- ARL-constrained calibration (Pollak).}
Instread we can fix a target average run length $\gamma$, i.e., how many windows between false alarms we can tolerate. This requires no assumption
about how often changes occur, therefore, calibration reduces to solving:
$$\mathbb{E}_\infty[\tau_A] = \gamma$$
for $A$.
\footnote{Full derivation in Appendix~\ref{app:calibration},
Approach B.} 

This produces the threshold that gives the desired
false-alarm rate, and by the \citet{Pollak1985} theorem, the resulting threshold-crossing rule is asymptotically minimax-optimal as $\gamma\to\infty$. No other rule with the same false-alarm rate can have smaller
worst-case delay. 

Approach A is valuable as a theoretical benchmark, 
in our experiments we use Approach B. It
requires only a false-alarm budget $\gamma$ that practitioners can state directly, 
so no separate optimisation over $c$ is needed. 

\subsection{The full detection procedure}
The components described above (the frozen PF-ODE encoder, the closed-form RBF MMD$^2$, the Tartakovsky-Spivak mixture likelihood ratio, and the Pollak-calibrated SR threshold) combine into an online procedure with a one-time offline stage and a per-window online iteration. Appendix~\ref{app:algorithm} contains the full pseudocode in Algorithm~\ref{alg:sr-detection} and a discussion of the design choices.

\section{Experiments on synthetic data}
\label{sec:experiments}

We demonstrate the procedure 
on three synthetic $p_0\!\to\!p_1$
pairs in $\mathbb R^2$, chosen to reflect 
different kinds of distributional change while holding window $w=25$, RBF bandwidth $\sigma=\sqrt2$, 
and a false-alarm budget $P_{\rm FA}\le5\%$ over a $175$-window monitoring horizon, calibrated by Approach B (Appendix~\ref{app:calibration}).

For each pair, a GRU history encoder of ~\cite{ChoEtAl2014GRU} paired with a pre-norm-residual denoiser that uses FiLM-conditioning of \cite{PerezEtAl2018FiLM} and v-prediction of \cite{SalimansHo2022} (see 
Appendix~\ref{app:architectures} for full specification) is trained from scratch for $3{,}000$ steps. The training uses the Adam optimiser \citep{KingmaBa2015Adam} with an EMA of the weights on series of length $256$. 
The context is then frozen from a $p_0$-only burn-in,
$(\hat\alpha,\hat v_1)$ calibrated as in Appendix~\ref{app:pilot-calibration},
and the threshold calibrated as above. Each trained detector is finally run
once on a fresh test series of length $L=200$ with a genuine, previously
unseen change-point at $\tau=100$.

\begin{description}
\item[Gaussian mixture rotation.]\mbox{}\\[-6pt]
\begin{itemize}
    \item $p_0=\tfrac12\mathcal N([-2,0],I)+\tfrac12\mathcal N([2,0],I)$.
    \item $p_1$ is the same two-component mixture rotated $90^\circ$. Mean
    and per-component covariance are unchanged (only the
    joint arrangement of the two modes rotates).
\end{itemize}

\item[Four clusters $\to$ one cluster.]\mbox{}\\[-6pt]
\begin{itemize}
    \item $p_0$ places four Gaussian blobs ($\mathrm{std}=0.35$) at the corners $(\pm2,\pm2)$.
    \item $p_1$ collapses all mass into a single centred blob ($\mathrm{std}=1.2$). A change in the number of modes rather than in any single low-order moment.
\end{itemize}

\item[Blob $\to$ Ring.]\mbox{}\\[-6pt]
\begin{itemize}
    \item $p_0=\mathcal N(0,I)$.
    \item $p_1$ is a thin ring at radius $2.5$ ($\mathrm{std}=0.2$). A
    purely radial, rotationally-symmetric redistribution of mass away from
    the centre, with near-identical mean.
\end{itemize}
\end{description}

\begin{figure}[h]
\centering
\includegraphics[width=\textwidth]{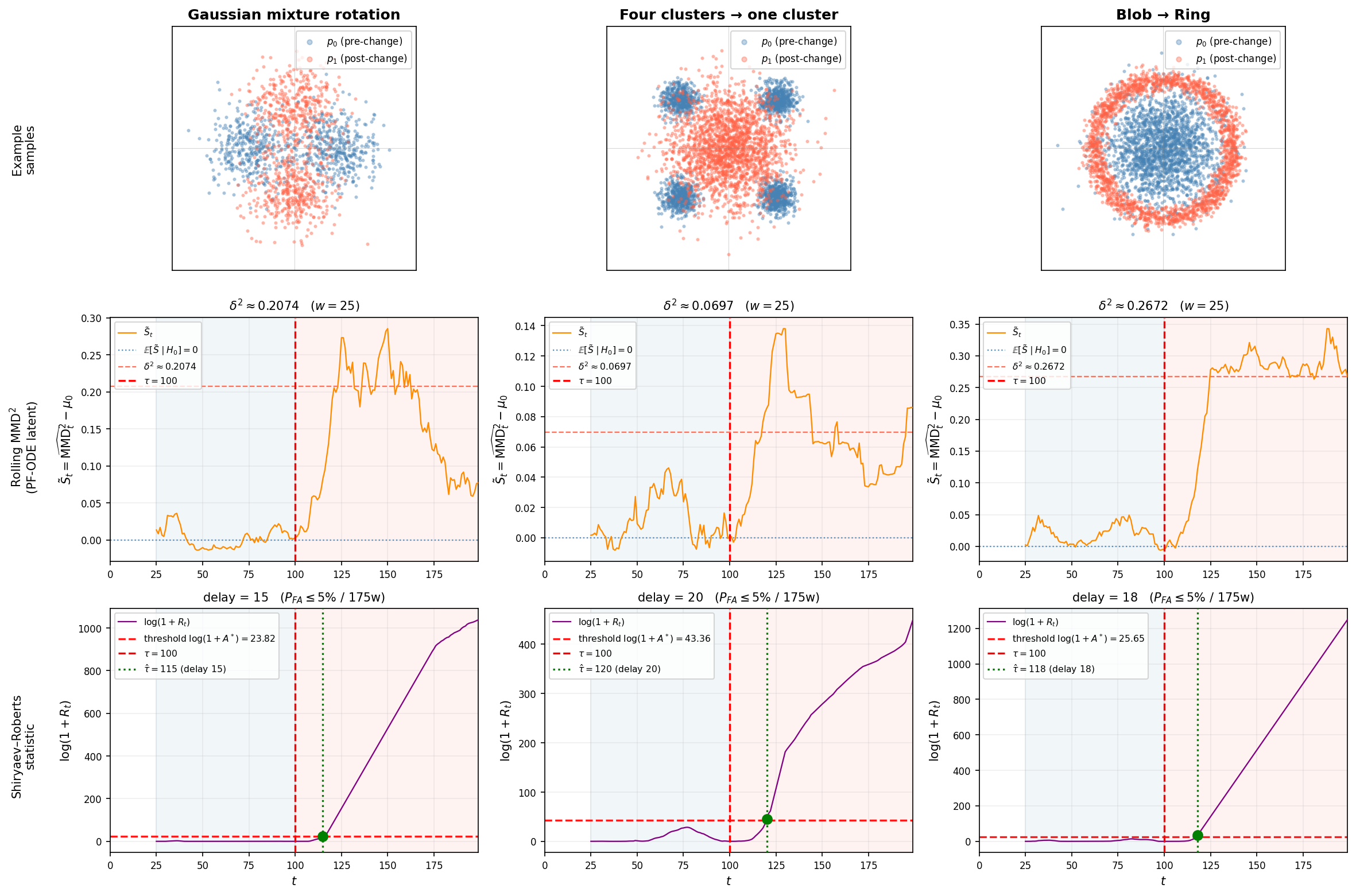}
\caption{Top: example draws from $p_0$ and $p_1$ for each pair. Middle:
the rolling $\widehat{\mathrm{MMD}}^2_t-\mu_0$ statistic, with
its $\hat\delta^2$ under $H_1$ marked. Bottom: the
Shiryaev--Roberts statistic $\log(1+R_t)$, its calibrated threshold, the true changepoint $\tau$, and the declared alarm $\hat\tau$.}
\label{fig:experiments}
\end{figure}

For a better resolution the online iterations evaluate $\widehat{\mathrm{MMD}}^2_n$
on a sliding window rather than the non-overlapping windows
the algorithm formally specifies, so consecutive $\tilde S_n$ are autocorrelated rather than i.i.d., and $\tau, \hat\tau$  are raw time indices rather than window counts. Threshold calibration
(Appendices~\ref{app:pilot-calibration}, \ref{app:calibration}) is carried
out under the same sliding construction, so the reported false-alarm
budget is internally consistent. The near-minimax delay-optimality proved
for non-overlapping windows is not guaranteed to transfer unchanged to the sliding construction.

All three delays land within a narrow band ($15$-$20$ steps) despite an
almost fourfold spread in $\hat\delta^2$, because the SR statistic reacts
exponentially fast once genuine evidence starts accumulating. 
The pilot signal strength mainly sets how many
windows it takes to {start} climbing, not how fast it climbs once it
does.

\section{A Real-Data Illustration: MNIST Digit 0 $\to$ 1}
\label{sec:real-data}

The three distribution pairs in Section~\ref{sec:experiments} are chosen so the nature of each shift is visually obvious. To check the
pipeline is not inadvertently relying on that transparency, we run it, unchanged, on an image domain. We use  MNIST digits~\citep{LeCunEtAl1998MNIST} down-sampled to $16\times16$ ($d=256$), $p_0$ as the digit ``0'', $p_1$ as the digit ``1'', with the same GRU history encoder now preceded by a small MLP frame embedding, and the same conditioned, pre-norm-residual, v-prediction denoiser architecture  (Appendix~\ref{app:architectures}).

The encoder-denoiser pair is trained with
classifier-free guidance~\citep{HoSalimans2022CFG} so the frozen context can also drive conditional generation, not only change-point detection. Window and
bandwidth follow the same convention $w=T_{\rm burn}$, $\sigma=\sqrt d$  as the synthetic pairs, giving $w=20$, $\sigma=16$. The threshold is calibrated in the same way (Approach B, $P_{\rm FA}\le5\%$ over a $105$-window horizon).

\begin{figure}[h]
\centering
\includegraphics[width=0.7\textwidth]{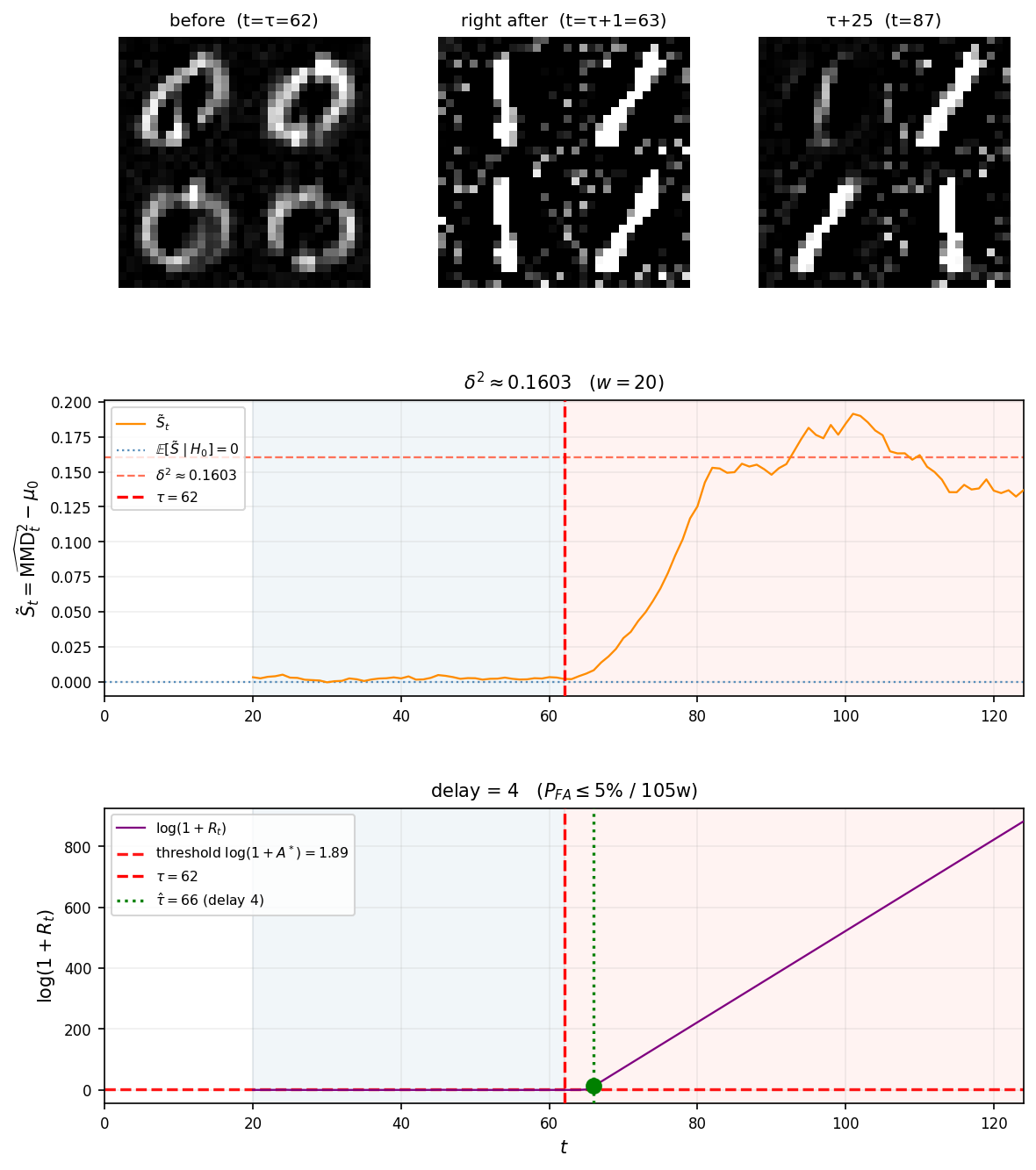}
\caption{MNIST digit $0\to1$, real data. Top: conditional generation
$p_\theta(x\mid\text{history}_{:t})$ just before $\tau$, one step after,
and $25$ steps after (four samples each, $2\times2$). Middle/bottom: the
same rolling MMD$^2$ and Shiryaev--Roberts panels as
Figure~\ref{fig:experiments}.}
\label{fig:mnist}
\end{figure}
 
Pilot calibration (described in Appendix~\ref{app:pilot-calibration}) on this reduced-budget model against a calibrated threshold
$\log(1+A^*)\approx1.89$ gives $\hat\delta^2\approx0.160$ which is slightly better compared to the strongest of the three synthetic pairs. 
Table~\ref{tab:experiments}).  
On a test series of length $125$ with a genuine change-point at $\tau=62$ 
the detector fires at $\hat\tau=66$ -- a delay of four windows, which is faster than any synthetic pair despite the much higher-dimensional observation space. This is consistent with detection speed tracking $\hat\delta^2$ rather than $d$ itself at fixed $w$. 

An interesting hypothesis is whether dimension works in the detector's favour in this example. Two clearly distinct digit classes,  pushed through the same encoder into a $256$-dimensional
latent space, end up far apart in the Euclidean distance 
whereas a hand-built 2D density shift has only two coordinates. 
The curse of dimensionality invoked in Section~\ref{sec:problem} affects the density estimator, however, it does not affect the signal that a well-trained encoder can carry. $\hat\delta^2$ does not shrink with $d$ the way a naive density estimation would. 

\section{Comparison with other methods}
\label{sec:baselines}


A natural question is how much of this performance comes from the PF-ODE encoder specifically, versus from the Shiryaev-Roberts machinery. We compare against three natural baselines on the pairs of Sections~\ref{sec:experiments}--\ref{sec:real-data} (the three synthetic pairs and MNIST).

\textbf{A}, the same two-sample $\widehat{\mathrm{MMD}}^2$ statistic computed directly on raw observations (no encoder), whose null law has no known closed form even asymptotically. Observations are standardised online by a rolling mean/std of its own history and its null approximated as $\mathcal N(0,1)$.

\textbf{B}, the mean-embedding statistic $\|\overline Z_w\|^2$ on the {same} PF-ODE latents, whose null is exactly $\tfrac1w\chi^2_d$ by the same injectivity argument as our full statistic, but which discards everything but the first moment;

\textbf{C}, classical Hotelling's $T^2$~\citep{Hotelling1931} (squared Mahalanobis distance of the window mean from a burn-in-estimated reference) fed into a one-sided CUSUM~\citep{Page1954}, whose null is asymptotically $\chi^2_d$ by the CLT. No PF-ODE, no MMD, no mixture-SR. 

Every method, including ours, is restricted to the same information. Only the burn-in segment of the series being monitored is ever treated as known-$p_0$ data; no baseline is given oracle access to the true generating $p_0$ (Appendix~\ref{app:baselines} gives the full construction).

Table~\ref{tab:baselines} reports the resulting metrics from $1{,}000$ independent Monte Carlo trials per dataset, each with a fresh realisation of the process and  change-point $\tau$ drawn uniformly at random after the burn-in window closes. Unlike Table~\ref{tab:experiments} in the Appendix, these numbers are not a single realized run but genuine estimates of false-alarm probability, miss probability, and mean detection delay, all at the shared $P_{\rm FA}\le5\%$ design budget. Only {our} method and method \textbf{B} carry a threshold that is {provably} calibrated to that budget, rather than approximately so.

\begin{table}[h]
\centering
\small
\begin{tabular}{llcccc}
\toprule
Experiment & Method & null dist. & $\mathbb{P}(\text{false alarm})$ & $\mathbb{P}(\text{miss})$ & avg. delay \\
\midrule
\multirow{4}{*}{Gaussian mixture rotation}
 & Ours & exact   & $\mathbf{1.9\%}$ & $9.7\%$  & $\mathbf{16.9}$ \\
 & A    & approx. & $32.0\%$          & $13.8\%$ & $30.9$ \\
 & B    & exact   & $24.1\%$          & $16.3\%$ & $36.2$ \\
 & C    & asympt. & $67.0\%$          & $\mathbf{2.8}\%$  & $38.5$ \\
\midrule
\multirow{4}{*}{Four clusters $\to$ one}
 & Ours & exact   & $\mathbf{2.9\%}$ & $15.3\%$  & $\mathbf{29.3}$ \\
 & A    & approx. & $44.3\%$          & $14.5\%$ & $40.6$ \\
 & B    & exact   & $0.0\%$           & $100\%$\textsuperscript{$\ddagger$}  & --- \\
 & C    & asympt. & $79.7\%$          & $\mathbf{0.7}\%$  & $31.8$ \\
\midrule
\multirow{4}{*}{Blob $\to$ Ring}
 & Ours & exact   & $\mathbf{1.8\%}$ & $9.2\%$   & $15.3$ \\
 & A    & approx. & $31.7\%$          & $5.0\%$  & $\mathbf{14.2}$ \\
 & B    & exact   & $23.2\%$          & $13.1\%$ & $31.8$ \\
 & C    & asympt. & $57.0\%$          & $\mathbf{3.1}\%$  & $26.7$ \\
\midrule
\multirow{5}{*}{MNIST $0\to1$}
 & Ours (DDIM inversion)          & exact   & $18.7\%$          & $4.2\%$  & $\mathbf{6.7}$ \\
 & Ours (EDICT inversion)\textsuperscript{$\dagger$} & exact   & $\mathbf{16.5\%}$ & $4.5\%$  & $6.9$ \\
 & A    & approx. & $18.5\%$          & $5.1\%$  & $8.1$ \\
 & B    & exact   & $50.9\%$          & $\mathbf{2.3}\%$  & $9.1$ \\
 & C    & asympt. & $98.7\%$          & $0.0\%$\textsuperscript{$\P$}  & $0.0$\textsuperscript{$\P$} \\
\bottomrule
\end{tabular}
\caption{Operating characteristics over $1{,}000$ Monte Carlo trials per
experiment ($\tau$ uniform after burn-in, shared $P_{\rm FA}\le5\%$
design budget). $\dagger$ Same encoder-denoiser checkpoint as the row above, only the DDIM encoding step at detection time replaced by
EDICT~\citep{WallaceEtAl2023EDICT} -- no retraining involved.}
\label{tab:baselines}
\end{table}

On the three synthetic pairs, the picture is unambiguous: Ours
is the only method that stays near the $5\%$ design budget (observed $1.8$--$2.9\%$), while every baseline overshoots it by roughly an order of magnitude ($24$--$80\%$). \textbf{A}'s $\mathcal N(0,1)$ approximation to its own null is not accurate enough in finite windows and over-alarms substantially. \textbf{B} is either mediocre 
or outright non-operational  when the shift is mean-preserving (the case of ``Four clusters $\to$ one'', where $\hat\delta^2<0$, so it never fires and gives a $100\%$ miss rate).
\textbf{C}'s asymptotic $\chi^2_d$ null is the most severely miscalibrated of all, since Hotelling's $T^2$ needs an approximately Gaussian window mean that a bimodal $p_0$ at $w=25$ does not deliver.

MNIST is the only exception. No method holds the $5\%$ budget. {Ours} overshoots it substantially.  
The EDICT inversion of \cite{WallaceEtAl2023EDICT} allows for a slight improvement but does not fix the problem. 
The underlying issue is that DDIM inversion of the burn-in observations does not recover a clean $\mathcal N(0,I)$ latent. The mean per-dimension variance of the recovered code $Z$ is $0.845$, not $1$.

This symptom is visible in Figure~\ref{fig:mnist} which shows the conditional samples generated from the frozen context with speckled noise around and behind the digit strokes instead of a clean digit. This is the visual representation of a score network whose denoising has not fully converged. 
EDICT did not help much because 
it deliberately trades exact invertibility for numerical stability. 

\section{Conclusion and Future Directions}
\label{sec:conclusion}
We address sequential change-point detection problem when neither the pre- nor post-change conditional density admits a closed form. The construction stacks three components. First, a conditional PF-ODE diffusion model, trained only on pre-change data with a frozen history context, gives a provably injective encoder, which maps the unknown data distribution onto $\mathcal N(0,I)$. Second, MMD$^2$ against this reference reduces detection to a single scalar per window, which is degenerate under $H_0$ but $\sqrt w$-Gaussian under $H_1$. 
Finally, the Tartakovsky-Spivak mixture of the Shiryaev-Roberts statistic, with Pollak-calibration to a stated false-alarm budget 
turns that sequence into a sequential decision rule.

Validated on three synthetic 
shifts the method detected all three within a narrow $15$-$20$-window delay despite an almost fourfold spread in signal strength $\hat\delta^2$. 
When run unchanged on real MNIST data ($d=256$), it detected a $0\to1$ digit-class
change in just four windows, which confirms
that dimension enters only the encoding cost, not the behaviour 
of the resulting scalar statistic $\tilde S_t$. So, a well-separated high-dimensional shift can be easier, not harder, to detect than a low-dimensional one. 

The method needs only 
a trainable conditional encoder, so it applies wherever sequential data
has 
an unknown high-dimensional conditional distribution. This includes financial return or order-books, where heavy tails and high-dimensionality make closed-form modelling impractical, auto-segmentation of video and audio into scenes, speakers, or musical sections, 
detecting abrupt
shifts in motion-sensor and self-driving telemetry, and so on. 

{The denoiser needs to be trained to a genuinely clean null.} That is, 
$Z\sim\mathcal N(0,I)$ under $H_0$  in Lemma~\ref{lem:pushforward-injective} has to come from a conditional, autoregressively-trained denoiser that has actually converged. The MNIST example in Section~\ref{sec:baselines} shows that this is the binding constraint in practice. 
This is the most consequential open problem, since every other component 
is conditioned on the encoder actually reaching that asymptotic state. 


The experiments commit to one fixed-bandwidth RBF kernel.  Table~\ref{tab:kernels} shows that the kernel choice determines which shifts are detectable at all, so adaptive or learned kernels 
could sharpen sensitivity. 
Finally, Algorithm~\ref{alg:sr-detection} targets a single $\theta$. Extending it to continuous monitoring by restarting the context and SR statistic at each alarm, as sketched in Appendix~\ref{app:algorithm}, 
and proving formal guarantees for the restart procedure remain open.

\newpage
\bibliographystyle{apalike}
\bibliography{literature}

\newpage
\appendix

\section{Injectivity of the Conditional PF-ODE Encoder}
\label{app:bijection}

\begin{lemma}
\label{lem:pushforward-injective}
Let $\Phi(\cdot;\mathbf{h}) : \mathbb{R}^d \to \mathbb{R}^d$ be injective and Borel measurable, with Borel measurable inverse on its range. Then for any two probability measures $P, P'$ on $\mathbb{R}^d$,
$$\Phi(\cdot;\mathbf{h})_\# P = \Phi(\cdot;\mathbf{h})_\# P' \quad\Longleftrightarrow\quad P = P'.$$
\end{lemma}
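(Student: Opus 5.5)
The direction ``$\Leftarrow$'' is immediate. If $P=P'$, then for every Borel set $B\subseteq\mathbb{R}^d$ we have $\Phi_\#P(B)=P(\Phi^{-1}(B))=P'(\Phi^{-1}(B))=\Phi_\#P'(B)$, where we abbreviate $\Phi=\Phi(\cdot;\mathbf{h})$. This uses only the measurability of $\Phi$, which guarantees that $\Phi^{-1}(B)$ is Borel.

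For ``$\Rightarrow$'' the plan is to recover $P$ from $\Phi_\#P$ by pulling back through the inverse map. Write $R=\Phi(\mathbb{R}^d)$ for the range and $\Psi:R\to\mathbb{R}^d$ for the Borel measurable inverse on $R$ given in the hypothesis. Fix a Borel set $A\subseteq\mathbb{R}^d$. The key identity is that, by injectivity, $A=\Phi^{-1}(\Phi(A))$, so
$$P(A)=P\bigl(\Phi^{-1}(\Phi(A))\bigr)=\Phi_\#P(\Phi(A)),$$
and the same holds for $P'$. Equality of the pushforwards then gives $P(A)=P'(A)$ for all Borel $A$, hence $P=P'$.

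The one point needing care, and the main obstacle, is that $\Phi(A)$ must be a measurable set so that $\Phi_\#P(\Phi(A))$ is defined. This is exactly where the measurable-inverse hypothesis enters. We have $\Phi(A)=\Psi^{-1}(A)$, the preimage of a Borel set under a Borel map on $R$, so $\Phi(A)$ is Borel relative to $R$. It therefore suffices that $R$ itself is Borel in $\mathbb{R}^d$.
\begin{itemize}
\item \emph{Route via Lusin--Souslin.} An injective Borel map between Polish spaces has Borel range and maps Borel sets to Borel sets. This gives measurability of $R$, and of $\Phi(A)$ directly.
\item \emph{Route avoiding Lusin--Souslin.} If one prefers not to invoke it, the argument can be run inside the measurable space $(R,\mathcal{B}(R))$, with trace $\sigma$-algebra. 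Both pushforwards are concentrated on $R$, since $\Phi_\#P(R)=P(\mathbb{R}^d)=1$, and $\Psi$ is then a measurable map with $\Psi_\#(\Phi_\#P)=(\Psi\circ\Phi)_\#P=P$. Hence
$$P=\Psi_\#(\Phi_\#P)=\Psi_\#(\Phi_\#P')=P',$$
which is the cleanest form of the argument.
\end{itemize}

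Finally, I would remark that for the PF-ODE encoder these hypotheses hold because the flow map of an ODE with locally Lipschitz drift is a homeomorphism onto its image (indeed a diffeomorphism). The lemma therefore applies with $P=p_0(\cdot\mid\mathbf{h})$. Taking $P'=p_1(\cdot\mid\mathbf{h})$, it shows that $Z_t\sim\mathcal{N}(0,I)$ after the change forces $p_1=p_0$, contradicting $p_0\neq p_1$.
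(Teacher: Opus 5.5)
Your argument is correct and is essentially the paper's: the paper proves ($\Rightarrow$) by applying $(\Phi^{-1})_\#$ to both pushforwards and using $\Phi^{-1}\circ\Phi=\mathrm{id}$, which is exactly your chain $P=\Psi_\#(\Phi_\#P)=\Psi_\#(\Phi_\#P')=P'$. Your treatment of the measurability of the range, either through Lusin--Souslin or by working in the trace $\sigma$-algebra on $R$, makes explicit a point the paper's proof leaves unstated.
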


\begin{proof}
The ($\Leftarrow$) direction is immediate. For ($\Rightarrow$), apply $\bigl(\Phi(\cdot;\mathbf{h})^{-1}\bigr)_\#$ to both sides:
$$\bigl(\Phi(\cdot;\mathbf{h})^{-1}\bigr)_\#\Phi(\cdot;\mathbf{h})_\# P = \bigl(\Phi(\cdot;\mathbf{h})^{-1}\circ\Phi(\cdot;\mathbf{h})\bigr)_\# P = P,$$
and likewise for $P'$, since $\Phi(\cdot;\mathbf{h})^{-1}\circ\Phi(\cdot;\mathbf{h}) = \mathrm{id}$ on the domain. Hence $P = P'$.
\end{proof}

Since $\Phi(\cdot;\mathbf{h}_{\rm fixed})_\# p_0 = \mathcal{N}(0,I)$ by construction (Section~\ref{sec:problem}) and $p_1 \neq p_0$, Lemma~\ref{lem:pushforward-injective} gives $\Phi(\cdot;\mathbf{h}_{\rm fixed})_\# p_1 \neq \mathcal{N}(0,I)$: no genuine distributional shift can be rendered invisible by the encoding, regardless of which moments or features it affects.

\begin{remark}
This guarantee requires only \emph{injectivity} of $\Phi(\cdot;\mathbf{h})$, not full surjectivity. PF-ODE flows obtain injectivity for free from the uniqueness of ODE trajectories (Picard--Lindelöf): distinct initial conditions cannot collide, so $\Phi(\cdot;\mathbf{h})$ is automatically invertible whenever the drift $f(x,\tau) - \tfrac{g^2(\tau)}{2}\nabla_x\log p_\tau(x)$ is Lipschitz in $x$. This is not architectural — no invertible-network design (as in normalizing flows such as RealNVP~\citep{DinhEtAl2017RealNVP} or Glow~\citep{KingmaDhariwal2018Glow}) is required.

In contrast, the stochastic reverse process of DDPM~\citep{HoEtAl2020DDPM} injects independent noise $\eta_\tau \sim \mathcal{N}(0,I)$ at every step, making the map a Markov kernel rather than a function: the same $X_t$ can be encoded into different latent codes across runs, and no measurable inverse exists. Lemma~\ref{lem:pushforward-injective} therefore does not apply, and a DDPM-based encoding offers no guarantee that a genuine change $p_1 \neq p_0$ will produce $P_1 \neq \mathcal{N}(0,I)$ --- the shift could, in principle, be averaged away by the injected noise.
\end{remark}

\section{Derivation of Terms $B$ and $C$ for the RBF Kernel}
\label{app:BC-derivation}

\subsection*{Term C}

$$C = \mathbb{E}_{Y,Y' \sim \mathcal{N}(0,I)}\!\left[
  \exp\!\left(-\frac{\|Y - Y'\|^2}{2\sigma^2}\right)\right]$$

Since $Y \perp Y'$ and each has independent coordinates,
$Y - Y' \sim \mathcal{N}(0, 2I)$. The kernel factorises over dimensions:
$$C = \prod_{l=1}^{d}
  \mathbb{E}_{\xi_l \sim \mathcal{N}(0,2)}\!\left[
  \exp\!\left(-\frac{\xi_l^2}{2\sigma^2}\right)\right]$$

For a single factor, we apply the moment generating function identity
for $\xi \sim \mathcal{N}(0, \sigma_\xi^2)$:
$$\mathbb{E}\!\left[e^{-t\xi^2}\right] = \bigl(1 + 2t\sigma_\xi^2\bigr)^{-1/2}$$

Setting $t = 1/(2\sigma^2)$ and $\sigma_\xi^2 = 2$:
$$\mathbb{E}_{\xi \sim \mathcal{N}(0,2)}\!\left[\exp\!\left(-\frac{\xi^2}{2\sigma^2}\right)\right]
= \left(1 + \frac{2}{\sigma^2}\right)^{-1/2}
= \frac{\sigma}{\sqrt{\sigma^2+2}}$$

Taking the product over $d$ independent dimensions:
$${C = \left(\frac{\sigma^2}{\sigma^2 + 2}\right)^{d/2}}$$

\subsection*{Term B}

For a fixed observation $z \in \mathbb{R}^d$:
$$B(z) = \mathbb{E}_{Y \sim \mathcal{N}(0,I)}\!\left[
  \exp\!\left(-\frac{\|z - Y\|^2}{2\sigma^2}\right)\right]$$

The kernel again factorises over coordinates:
$$B(z) = \prod_{l=1}^{d}\,
  \mathbb{E}_{Y_l \sim \mathcal{N}(0,1)}\!\left[
  \exp\!\left(-\frac{(z_l - Y_l)^2}{2\sigma^2}\right)\right]$$

For each factor, we compute the integral by combining the two Gaussian terms.
Expanding in the exponent:
$$-\frac{(z_l - y)^2}{2\sigma^2} - \frac{y^2}{2}
= -\frac{1}{2\sigma^2}\bigl(z_l^2 - 2z_l y + y^2\bigr) - \frac{y^2}{2}
= -\frac{\sigma^2 + 1}{2\sigma^2}\,y^2 + \frac{z_l}{\sigma^2}\,y - \frac{z_l^2}{2\sigma^2}$$

Completing the square in $y$:
$$= -\frac{\sigma^2+1}{2\sigma^2}\!\left(y - \frac{z_l}{\sigma^2+1}\right)^2
  + \frac{z_l^2}{2(\sigma^2+1)} - \frac{z_l^2}{2\sigma^2}
= -\frac{\sigma^2+1}{2\sigma^2}\!\left(y - \frac{z_l}{\sigma^2+1}\right)^2
  - \frac{z_l^2}{2(\sigma^2+1)}$$

The one-dimensional integral therefore evaluates to:
\begin{align*}
&\frac{1}{\sqrt{2\pi}}\int_{-\infty}^{\infty}
  \exp\!\left(-\frac{(z_l-y)^2}{2\sigma^2} - \frac{y^2}{2}\right)dy \\
&= \exp\!\left(-\frac{z_l^2}{2(\sigma^2+1)}\right)
  \cdot \frac{1}{\sqrt{2\pi}}\int_{-\infty}^{\infty}
  \exp\!\left(-\frac{\sigma^2+1}{2\sigma^2}
  \left(y - \frac{z_l}{\sigma^2+1}\right)^2\right)dy \\
&= \exp\!\left(-\frac{z_l^2}{2(\sigma^2+1)}\right)
  \cdot \sqrt{\frac{\sigma^2}{\sigma^2+1}}
\end{align*}

where the last Gaussian integral gives $\sqrt{2\pi\sigma^2/(\sigma^2+1)}$.
Taking the product over all $d$ dimensions:
$${B(z) = \left(\frac{\sigma^2}{\sigma^2 + 1}\right)^{d/2}
  \exp\!\left(-\frac{\|z\|^2}{2(\sigma^2 + 1)}\right)}$$

\subsection*{Bias of the diagonal-included estimator}
\label{app:bias-derivation}

Under $H_0$ ($Z_i \overset{\text{iid}}{\sim} P_0 = Q$), split Term $A$ into its diagonal and off-diagonal parts:
$$\frac{1}{w^2}\sum_{i,j=1}^{w} k(Z_i,Z_j)
= \frac{1}{w^2}\sum_{i\neq j} k(Z_i,Z_j) + \frac{1}{w^2}\sum_{i=1}^{w} k(Z_i,Z_i).$$
For $i\neq j$, $Z_i \perp Z_j$ and both are drawn from $Q$, so $\mathbb{E}_0[k(Z_i,Z_j)] = \mathbb{E}_{Y,Y'\sim Q}[k(Y,Y')] = C$; there are $w(w-1)$ such ordered pairs, contributing $\tfrac{w-1}{w}C$. On the diagonal, $k(Z_i,Z_i) = k_0 := k(z,z)$, a constant for any stationary kernel (for the RBF kernel, $k_0 = 1$ regardless of $z$); the $w$ diagonal terms contribute $k_0/w$. For Term $B$, since $Z_i \sim Q$ under $H_0$,
$$\mathbb{E}_0[B(Z_i)] = \mathbb{E}_{Z\sim Q}\bigl[\mathbb{E}_{Y\sim Q}[k(Z,Y)]\bigr] = \mathbb{E}_{Z,Y\sim Q}[k(Z,Y)] = C,$$
so the $w$ terms of Term $B$ contribute $-2C$. Term $C$ is already the constant $C$ itself. Summing all three:
$$\mathbb{E}_0\bigl[\widehat{\mathrm{MMD}}^2\bigr]
= \underbrace{\frac{w-1}{w}C + \frac{k_0}{w}}_{\text{Term }A}
\;\underbrace{-\,2C}_{\text{Term }B}
\;+\;\underbrace{C}_{\text{Term }C}
= \frac{k_0-C}{w},$$
an $O(1/w)$ bias coming entirely from the $w$ diagonal terms $k(Z_i,Z_i)$ retained in Term $A$; removing them, as in $\widehat{\mathrm{MMD}}^2_u$ below, removes the bias exactly rather than merely asymptotically. The analogous bias of the diagonal-included (``V-statistic'') estimator in the two-sample setting is well known; see e.g.\ \citet{GrettonEtAl2012}.

\section{Asymptotic Distribution of $\widehat{\mathrm{MMD}}^2$: Derivations}
\label{app:mmd-asymptotics}

\subsection*{Step 1 --- Centered kernel and degeneracy}

Define the \emph{centered kernel}:
$$\tilde{k}(x,y) = k(x,y) - B(x) - B(y) + C$$

where $B(z) = \mathbb{E}_{Y \sim Q}[k(z,Y)]$ is Term B evaluated at $z$.
The unbiased estimator is then a degree-2 U-statistic with kernel $\tilde{k}$:
$$\widehat{\mathrm{MMD}}^2_u = \frac{1}{w(w-1)}\sum_{i \neq j}\tilde{k}(Z_i, Z_j)$$

Under $H_0$ ($Z \sim P_0 = Q$), the centered kernel satisfies the
\emph{first-order degeneracy condition}:
$$\mathbb{E}_{Z' \sim P_0}\bigl[\tilde{k}(z, Z')\bigr]
= \mathbb{E}_{Z'}[k(z,Z')] - B(z) - \mathbb{E}_{Z'}[B(Z')] + C
= B(z) - B(z) - C + C = 0 \quad \forall\, z$$

Since this holds for \emph{every} fixed $z$, it holds in particular after
averaging over $z = Z_i \sim P_0$ itself: by the tower property, for each
pair $i \neq j$,
$$\mathbb{E}_0\bigl[\tilde{k}(Z_i,Z_j)\bigr]
= \mathbb{E}_0\Bigl[\underbrace{\mathbb{E}_{Z_j}\bigl[\tilde{k}(Z_i,Z_j)\mid Z_i\bigr]}_{=\,0\text{ by the above, at }z=Z_i}\Bigr] = 0,$$
so the unbiased estimator has \emph{exactly} zero mean under $H_0$ for every $w$:
$$\mathbb{E}_0\bigl[\widehat{\mathrm{MMD}}^2_u\bigr]
= \frac{1}{w(w-1)}\sum_{i\neq j}\mathbb{E}_0\bigl[\tilde{k}(Z_i,Z_j)\bigr] = 0.$$

For an ordinary (non-degenerate) U-statistic the standard CLT gives
a $\sqrt{w}$-rate and a Gaussian limit. For a first-order degenerate U-statistic
the leading Gaussian term vanishes identically, and the fluctuation is of order $1/w$.
Correct normalisation requires scaling by $w$:
$$w\cdot\widehat{\mathrm{MMD}}^2_u = \frac{w}{w(w-1)}\sum_{i \neq j}\tilde{k}(Z_i,Z_j)$$

\subsection*{Step 2 --- Spectral decomposition and limiting distribution}

The centered kernel $\tilde{k}$ is symmetric and square-integrable under $P_0$.
Define the integral operator $T_{\tilde{k}} : L^2(P_0) \to L^2(P_0)$:
$$(T_{\tilde{k}}\,\phi)(x) = \int \tilde{k}(x,y)\,\phi(y)\,dP_0(y) = \lambda\,\phi(x)$$

By the Hilbert--Schmidt theorem, $T_{\tilde{k}}$ is compact and self-adjoint,
so it has a countable spectral decomposition. The degeneracy condition forces
all eigenfunctions $\phi_l$ to satisfy $\mathbb{E}_{P_0}[\phi_l(Z)] = 0$ (zero mean),
with $\mathrm{Var}[\phi_l(Z)] = 1$ by normalisation. The Mercer expansion gives:
$$\tilde{k}(x,y) = \sum_{l=1}^{\infty} \lambda_l\,\phi_l(x)\,\phi_l(y),
\qquad \lambda_l \geq 0, \quad \sum_{l}\lambda_l^2 < \infty$$

Substituting into the rescaled U-statistic and using the symmetry of $\tilde{k}$:
$$w\cdot\widehat{\mathrm{MMD}}^2_u
\approx \frac{1}{w}\sum_{i \neq j}\tilde{k}(Z_i,Z_j)
= \sum_{l=1}^{\infty} \lambda_l
\left[\underbrace{\left(\frac{1}{\sqrt{w}}\sum_{i=1}^{w}\phi_l(Z_i)\right)^2}_{=:\,S_l^2}
- \underbrace{\frac{1}{w}\sum_{i=1}^{w}\phi_l(Z_i)^2}_{\to\,1\text{ by LLN}}\right]$$

By the CLT, $S_l \xrightarrow{d} \mathcal{N}(0,1)$ for each $l$ (since $\mathbb{E}[\phi_l] = 0$,
$\mathrm{Var}[\phi_l] = 1$). The orthogonality $\mathbb{E}[\phi_l(Z)\phi_m(Z)] = \delta_{lm}$
implies $S_l \perp S_m$ asymptotically for $l \neq m$. Together with Slutsky's theorem:
$${w\cdot\widehat{\mathrm{MMD}}^2_u \;\xrightarrow{d}\;
\sum_{l=1}^{\infty} \lambda_l\,(Z_l^2 - 1),
\qquad Z_l \overset{\mathrm{iid}}{\sim} \mathcal{N}(0,1)}$$

This result was established for degenerate U-statistics by \citet{Hall1984};
the MMD-specific statement appears in \citet[Theorem~12]{GrettonEtAl2012}.
The limit belongs to the class of \emph{generalised $\chi^2$ distributions}
(infinite weighted sums of centred $\chi^2(1)$ variables), for which no
closed-form CDF exists.

\subsection*{Step 3 --- Eigenvalues for the RBF kernel under $P_0 = \mathcal{N}(0,I)$}

Consider first $d=1$. We seek $\lambda,\phi$ solving the eigenvalue problem for the (uncentred) kernel $k(x,y)=e^{-(x-y)^2/2\sigma^2}$ against $P_0=\mathcal N(0,1)$:
$$\int_{-\infty}^{\infty} k(x,y)\,\phi(y)\,\frac{e^{-y^2/2}}{\sqrt{2\pi}}\,dy=\lambda\,\phi(x).$$
Trying the ground-state ansatz $\phi_0(x)=e^{-\beta x^2}$ and evaluating the Gaussian integral (complete the square in $y$, then apply $\int e^{-Ky^2+Ly}dy=\sqrt{\pi/K}\,e^{L^2/4K}$) gives, after matching the coefficient of $x^2$ on both sides, a self-consistency condition for $K:=\tfrac{1}{2\sigma^2}+\beta+\tfrac12$:
$$K^2-\left(\frac{1}{\sigma^2}+\frac12\right)K+\frac{1}{4\sigma^4}=0.$$
Taking the admissible (larger) root,
$$K = A := p+q+u, \qquad p=\frac14,\quad q=\frac{1}{2\sigma^2},\quad u=\sqrt{p^2+2pq}=\frac{\sqrt{\sigma^2+4}}{4\sigma},$$
which matches the classical parametrisation of Gaussian-kernel/Gaussian-measure eigenproblems (\citealt{ZhuEtAl1997}; \citealt[\S4.3.1]{RasmussenWilliams2006}). The ladder structure of the Hermite/harmonic-oscillator basis propagates this ratio to every mode: writing the (unnormalised) eigenfunctions as weighted Hermite functions,
$$\phi_n(x)\propto He_n\!\bigl(c(\sigma)\,x\bigr)\,e^{-a(\sigma)x^2}, \qquad a(\sigma)=u-p,\quad c(\sigma)=2\sqrt{u},$$
the eigenvalues decay \emph{exactly geometrically}, $\lambda_n=\lambda_0\,r^{\,n}$, with ratio
$$r=\frac{q}{A}=\frac{1/(2\sigma^2)}{1/4+1/(2\sigma^2)+\sqrt{\sigma^2+4}/(4\sigma)}=\frac{2}{\sigma^2+2+\sigma\sqrt{\sigma^2+4}}.$$
Rationalising (multiply numerator and denominator by $\sigma^2+2-\sigma\sqrt{\sigma^2+4}$, whose product with the denominator collapses to $4$) gives the compact closed form
$${r=\frac{\sigma^2+2-\sigma\sqrt{\sigma^2+4}}{2}\in(0,1)}.$$
As a check, $r\to1$ as $\sigma\to0$ (a near-delta kernel needs infinitely many modes to resolve) and $r\to0$ as $\sigma\to\infty$ (a near-constant kernel degenerates to rank one) --- both limits behave as they should, which rules out the naive guess $r=\sigma^2/(\sigma^2+2)$ (that expression is in fact $C^{2/d}$ from Step 1, an unrelated quantity, and has the opposite monotonicity in $\sigma$).

For general $d$, the kernel factorises over coordinates, so the eigenfunctions are products of one-dimensional weighted Hermite functions indexed by a multi-index $\mathbf n=(n_1,\ldots,n_d)$,
$$\phi_\mathbf{n}(x) \propto \prod_{l=1}^{d}
He_{n_l}\!\bigl(c(\sigma)\,x_l\bigr)\cdot e^{-a(\sigma)x_l^2},
\qquad \lambda_\mathbf{n} \propto r^{|\mathbf{n}|},$$
and all $\binom{m+d-1}{d-1}$ eigenfunctions of total degree $m=|\mathbf n|$ share the same eigenvalue:
$$\lambda_m \propto r^m = \left(\frac{\sigma^2+2-\sigma\sqrt{\sigma^2+4}}{2}\right)^{m}.$$

The series $\sum_m \binom{m+d-1}{d-1}\lambda_m^2 < \infty$ ensures Hilbert--Schmidt
compactness, and the geometric decay means the limiting distribution is
dominated by the first few eigenfunctions.

\begin{table}[h]
\centering
\small
\setlength{\tabcolsep}{4pt}
\renewcommand{\arraystretch}{1.6}
\begin{tabular}{lp{2.3cm}p{2.4cm}p{2.4cm}p{3.0cm}p{2.6cm}}
\toprule
\textbf{Kernel} &
$k(x,y)$ &
$C$ &
$B$ &
$w\cdot\widehat{\mathrm{MMD}}^2\xrightarrow{d}$ &
\textbf{Detectable shifts} \\
\midrule

Linear &
$x^\top y$ &
$0$ &
$0$ &
$\chi^2(d)$\newline(exact, any $w$) &
Mean only:\newline$\mathbb{E}[Z] \neq 0$ \\[4pt]

Deg.-$p$ poly. &
$(1 + x^\top y/c)^p$ &
$\displaystyle\sum_{m=0}^{\lfloor p/2\rfloor}\kappa_m$ &
$\displaystyle\sum_{m=0}^{\lfloor p/2\rfloor}\tau_m(z)$ &
$\sum_{l=1}^{K}\lambda_l(Z_l^2-1)$,\newline
$K = \binom{p+d}{d}-1$ &
Moments $1,\ldots,p$:\newline
mean, cov.,\newline
skewness $(p\!\geq\!3)$ \\[4pt]

\textbf{RBF} &
$e^{-\|x-y\|^2/2\sigma^2}$ &
$\left(\dfrac{\sigma^2}{\sigma^2+2}\right)^{d/2}$ &
$\left(\dfrac{\sigma^2}{\sigma^2+1}\right)^{d/2}$\newline
$\cdot\,e^{-\|z\|^2/2(\sigma^2+1)}$ &
$\sum_{l=1}^{\infty}\lambda_l(Z_l^2-1)$,\newline
$\lambda_l \propto r^l$ &
Any $P\!\neq\!Q$  \\[4pt]

Laplace &
$e^{-\|x-y\|/\sigma}$ &
No closed\newline form &
No closed\newline form &
$\sum_{l=1}^{\infty}\lambda_l(Z_l^2-1)$,\newline
$\lambda_l \sim l^{-(d+1)/2}$ &
Any $P\!\neq\!Q$ \\[4pt]

IMQ &
$(c^2\!+\!\|x\!-\!y\|^2)^{-\beta}$ &
No closed\newline form &
No closed\newline form &
$\sum_{l=1}^{\infty}\lambda_l(Z_l^2-1)$,\newline
exp.\ decay &
Any $P\!\neq\!Q$ \\

\bottomrule
\end{tabular}
\caption{$\mathrm{MMD}^2$ against $Q = \mathcal{N}(0,I)$ for common kernels.}
\label{tab:kernels}
\end{table}

\subsection*{Alternative case --- Hoeffding decomposition}
\label{app:mmd-alt}

For a symmetric kernel $h(x,y)$ and an i.i.d.\ sample $Z_1,\ldots,Z_w$ drawn from
a common law -- $P_0$ under $H_0$, $P_1$ under $H_1$ -- the U-statistic
$U_w = \frac{1}{w(w-1)}\sum_{i\neq j}h(Z_i,Z_j)$ admits
the orthogonal \citet{Hoeffding1948} decomposition:
$$h(z_1, z_2) = \mu + h_1(z_1) + h_1(z_2) + h_2(z_1,z_2)$$
where, with expectations taken under that same sampling law,
\begin{align*}
\mu &= \mathbb{E}[h(Z,Z')], \\
h_1(z) &= \mathbb{E}_{Z'}[h(z,Z')] - \mu, \\
h_2(z_1,z_2) &= h(z_1,z_2) - h_1(z_1) - h_1(z_2) - \mu.
\end{align*}

By construction, $\mathbb{E}[h_1(Z)] = 0$ and $\mathbb{E}_{Z'}[h_2(z,Z')] = 0$
for every fixed $z$, and the three components are mutually orthogonal.
Substituting into the centered statistic:
$$U_w - \mu
= \underbrace{\frac{2}{w}\sum_{i=1}^{w} h_1(Z_i)}_{\text{linear term}}
+ \underbrace{\frac{1}{w(w-1)}\sum_{i\neq j} h_2(Z_i,Z_j)}_{\text{degenerate U-statistic}}$$

In our setting $h = \tilde{k}$, the kernel centred with respect to $Q = \mathcal{N}(0,I)$,
so $\mu = \mathrm{MMD}^2(P_0,Q)$ under $H_0$ and $\mu = \mathrm{MMD}^2(P_1,Q)$ under $H_1$.
Under the null $P_0 = Q$, the first-order projection evaluates to:
$$h_1(z) = \mathbb{E}_{Z'\sim Q}[\tilde{k}(z,Z')] - 0
= \underbrace{\mathbb{E}_{Z'\sim Q}[k(z,Z')]}_{=\,B(z)} - B(z)
  - \underbrace{\mathbb{E}_{Z'\sim Q}[B(Z')]}_{=\,C} + C = 0$$

Hence $h_1 \equiv 0$ on all of $\mathbb{R}^d$: this is precisely \emph{first-order degeneracy}.
The linear term vanishes identically, and the statistic is governed entirely
by the second-order degenerate part, which after rescaling by $w$ converges
to the generalised $\chi^2$ distribution described above.

Suppose now $Z_1,\ldots,Z_w$ are instead drawn i.i.d.\ from $P_1 \neq Q$, with
signal strength $\delta^2 = \mathrm{MMD}^2(P_1, Q) > 0$ taking the role of $\mu$.
The first-order projection becomes:
$$h_1(z) = \mathbb{E}_{Z'\sim P_1}[\tilde{k}(z,Z')] - \delta^2
= \underbrace{\mathbb{E}_{Z'\sim P_1}[k(z,Z')]}_{\neq\,B(z)}
  - B(z) - \mathbb{E}_{Z'\sim P_1}[B(Z')] + C - \delta^2 \;\not\equiv\; 0$$

since $\mathbb{E}_{Z'\sim P_1}[k(z,Z')] \neq \mathbb{E}_{Z'\sim Q}[k(z,Z')] = B(z)$
whenever $P_1 \neq Q$.
Let $\sigma_1^2 = \mathrm{Var}_{Z\sim P_1}[h_1(Z)] > 0$.

The linear term is now a sum of $w$ independent, mean-zero contributions:
$$\frac{2}{w}\sum_{i=1}^{w} h_1(Z_i)
= \frac{2}{\sqrt{w}} \cdot \frac{1}{\sqrt{w}}\sum_{i=1}^{w} h_1(Z_i)
\;\xrightarrow{d}\; \mathcal{N}\!\left(0,\,\frac{4\sigma_1^2}{w}\right)$$

by the standard CLT, since $\mathbb{E}[h_1(Z_i)] = 0$ and
$\mathrm{Var}[h_1(Z_i)] = \sigma_1^2 < \infty$.
This contributes at rate $O_p(w^{-1/2})$.

The second-order term $\frac{1}{w(w-1)}\sum_{i\neq j}h_2(Z_i,Z_j)$ remains
a degenerate U-statistic (since $\mathbb{E}_{Z'}[h_2(z,Z')] = 0$ by construction)
and satisfies $\mathrm{Var}\bigl[\frac{1}{w(w-1)}\sum_{i\neq j}h_2\bigr] = O(w^{-2})$,
placing it at rate $O_p(w^{-1}) \ll O_p(w^{-1/2})$.
By Slutsky's theorem the degenerate term is asymptotically negligible, giving:

$${\sqrt{w}\!\left(\widehat{\mathrm{MMD}}^2_u - \delta^2\right)
\;\xrightarrow{d}\; \mathcal{N}\!\left(0,\; 4\sigma_1^2\right),
\qquad \sigma_1^2 = \mathrm{Var}_{Z\sim P_1}\!\left[\mathbb{E}_{Z'\sim P_1}[\tilde{k}(Z,Z')]\right]}$$

\section{Tartakovsky--Spivak Mixture Likelihood Ratio under the Exponential Prior}
\label{app:tartakovsky-spivak}

Under $H_1$ conditional on signal strength $\delta^2$, the window statistic
is approximately $\tilde S_t \sim \mathcal N(\delta^2, v_1^2)$. Averaging this
Gaussian density over the exponential prior $\pi(\delta^2) = \alpha
e^{-\alpha\delta^2}$, $\delta^2>0$, gives the marginal (mixture) density
$$\bar{p}_1(s) = \int_0^{\infty}
\frac{1}{\sqrt{2\pi}\,v_1}\exp\!\left(-\frac{(s-\delta^2)^2}{2v_1^2}\right)
\alpha e^{-\alpha\delta^2}\,d(\delta^2).$$

Write $x = \delta^2$ for brevity and expand the exponent:
$$-\frac{(s-x)^2}{2v_1^2} - \alpha x
= -\frac{1}{2v_1^2}\Bigl[x^2 - 2x(s - \alpha v_1^2) \Bigr] - \frac{s^2}{2v_1^2}.$$

Completing the square in $x$:
$$= -\frac{\bigl(x-(s-\alpha v_1^2)\bigr)^2}{2v_1^2}
+ \frac{(s-\alpha v_1^2)^2 - s^2}{2v_1^2}.$$

The quadratic term in the exponent simplifies to a term linear in $s$:
$$\frac{(s-\alpha v_1^2)^2 - s^2}{2v_1^2}
= \frac{-2\alpha v_1^2 s + \alpha^2 v_1^4}{2v_1^2}
= -\alpha s + \frac{\alpha^2 v_1^2}{2}.$$

Substituting back,
$$\bar p_1(s) = \alpha \exp\!\left(-\alpha s + \frac{\alpha^2 v_1^2}{2}\right)
\underbrace{\int_0^\infty \frac{1}{\sqrt{2\pi}\,v_1}
\exp\!\left(-\frac{\bigl(x-(s-\alpha v_1^2)\bigr)^2}{2v_1^2}\right) dx}_{=\;\mathbb P\bigl(\mathcal N(s-\alpha v_1^2,\,v_1^2) > 0\bigr)}.$$

The remaining integral is the probability that a $\mathcal N(s-\alpha v_1^2,
v_1^2)$ variable exceeds $0$ -- the boundary of the prior's support -- i.e.\
$$\int_0^\infty \frac{1}{\sqrt{2\pi}\,v_1}
\exp\!\left(-\frac{\bigl(x-(s-\alpha v_1^2)\bigr)^2}{2v_1^2}\right) dx
= \Phi\!\left(\frac{s-\alpha v_1^2}{v_1}\right).$$

Collecting terms gives the closed-form mixture density
$${\bar{p}_1(s)
= \alpha\,\exp\!\left(-\alpha s + \frac{\alpha^2 v_1^2}{2}\right)
\Phi\!\left(\frac{s - \alpha v_1^2}{v_1}\right)}.$$

Dividing by the null density $g_0$ of $G_0$ -- evaluated numerically, since
$G_0$ has no closed form (Appendix~\ref{app:mmd-asymptotics}) -- gives the
mixture likelihood ratio used in the Shiryaev--Roberts recursion:
$$\Lambda_t^{\pi} = \frac{\bar p_1(\tilde S_t)}{g_0(\tilde S_t)}
= \frac{\alpha\,
\exp\!\left(-\alpha\tilde{S}_t + \frac{1}{2}\alpha^2 v_1^2\right)
\Phi\!\left(\dfrac{\tilde{S}_t - \alpha v_1^2}{v_1}\right)}
{g_0(\tilde{S}_t)}.$$

\section{Pilot Calibration of the Prior Rate $\alpha$ and Scale $v_1$}
\label{app:pilot-calibration}

The mixture $\bar p_1(s;\alpha,v_1)$ of Appendix~\ref{app:tartakovsky-spivak}
is a function of two hyperparameters that Algorithm~\ref{alg:sr-detection}
takes as given. Section~\ref{sec:problem} shows that $\delta^2$ itself never
needs to be estimated -- the mixture already averages over it -- but
$(\alpha, v_1)$ still have to be fixed somehow before the mixture can be
evaluated at all. This appendix gives the plug-in construction used for
that in Section~\ref{sec:experiments}, together with what it does and does
not require.

\subsection*{Construction}

Fix a pool of $N_{\rm pool}$ observations standing in for $H_1$ -- either
genuine post-change data, if a validation or backtest window with a known
past change is available, or a domain-informed synthetic perturbation of
the training data (a hypothesised shift: a mean shift, a covariance
rotation, an inflated tail, etc.) when no real post-change instance has yet
been observed. Encode the pool through the \emph{same frozen} map
$\Phi(\cdot; \mathbf h_{\rm fixed})$ used online, then repeatedly draw a
window of size $w$ from the encoded pool (with replacement across draws) and
evaluate the closed-form RBF estimator on it, giving $N_{\rm pilot}$ draws
$$S^{(1)}, \ldots, S^{(N_{\rm pilot})} \overset{\text{i.i.d.}}{\sim}
\text{(approx.)}\ \tilde S \mid H_1.$$
Method-of-moments estimates follow directly from the model
$\tilde S \mid \delta^2 \sim \mathcal N(\delta^2, v_1^2)$,
$\delta^2 \sim \text{Exponential}(\alpha)$, whose marginal mean and variance
are $\mathbb E[\tilde S] = 1/\alpha$ and
$\mathrm{Var}(\tilde S) \approx v_1^2$ (the prior's own variance $1/\alpha^2$
is folded into $v_1^2$ here rather than fit separately, since the two
sources of spread are not identifiable from $N_{\rm pilot}$ scalar
draws alone):
$${\hat\delta^2 = \overline{S} = \frac{1}{N_{\rm pilot}}\sum_{i=1}^{N_{\rm pilot}} S^{(i)},
\qquad
\hat v_1 = \sqrt{\frac{1}{N_{\rm pilot}}\sum_{i=1}^{N_{\rm pilot}} \bigl(S^{(i)} - \overline S\bigr)^2},
\qquad
\hat\alpha = \frac{1}{\hat\delta^2}.}$$
These are the values substituted into $\Lambda_t^\pi$ and, consequently,
into the null simulation used for threshold calibration
(Appendix~\ref{app:calibration}), since that simulation also runs
$\Lambda_t^\pi$ with the fitted $(\hat\alpha, \hat v_1)$ on synthetic
$H_0$ paths.

\subsection*{What this does and does not require}

Only the \emph{shape} of the anticipated shift needs to be plausible, not
its exact post-change law: $\hat\alpha, \hat v_1$ enter $\Lambda_t^\pi$ only
through the mixture that gets averaged over $\delta^2$ in the first place,
so a pilot pool that gets the sign or rough magnitude of the effect right
but not its precise distribution still yields a serviceable detector --
Section~\ref{sec:experiments} reports one dataset (the chirality-flip pair)
where the true $\delta^2$ is small and pilot estimation faithfully
recovers this, rather than the discrepancy being an estimation failure.
What it does \emph{not} do is relieve the false-alarm ratio of any
dependence on $(\hat\alpha,\hat v_1)$: the null simulation of
Appendix~\ref{app:calibration} is run with the fitted mixture, so a badly
misspecified pilot pool changes \emph{which} threshold $A$ is required to
hit the target $\gamma$ (see the remark on non-universality of $A$ at the
end of that appendix) without changing the guarantee that whatever $A$ is
selected does hit it, since that calibration step is self-consistent by
construction -- it simulates under the same $(\hat\alpha,\hat v_1)$ it will
be deployed with. What degrades under misspecification is purely
statistical efficiency: a pilot pool that understates the eventual true
$\delta^2$ (large $\hat\alpha$) yields a mixture concentrated near small
effect sizes, which reacts more sluggishly once a larger shift actually
occurs, exactly as an underpowered study designed for too small an effect
size loses power against a larger one.

\subsection*{Statistics for synthetic data}

The table below provides main descriptive statistics for proper density estimation.

\begin{table}[h]
\centering
\begin{tabular}{lccccc}
\toprule
Experiment & $\hat\delta^2$ & $\log(1+A^*)$ & $\tau$ & $\hat\tau$ & delay \\
\midrule
Gaussian mixture rotation  & $0.207$ & $23.8$ & $100$ & $115$ & $15$ \\
Four clusters $\to$ one    & $0.070$ & $43.4$ & $100$ & $120$ & $20$ \\
Blob $\to$ Ring            & $0.267$ & $25.6$ & $100$ & $118$ & $18$ \\
\bottomrule
\end{tabular}
\caption{Pilot signal strength, calibrated threshold, and detection delay
(raw time steps) for the three experiments, all at the shared
$P_{\rm FA}\le5\%$/$175$-window budget.}
\label{tab:experiments}
\end{table}

\section{Numerical Stability of the Shiryaev--Roberts Recursion}
\label{app:numerical-stability}

The recursion of the main text,
$$R_t = (1+R_{t-1})\,\Lambda_t^{\pi}, \qquad R_0 = 0,$$
is exact but numerically fragile when evaluated in raw (non-logarithmic)
floating-point arithmetic, for two compounding reasons specific to this
construction.

\paragraph{Source of the fragility.}
First, $g_0$ is not available in closed form and is evaluated by a kernel
density estimate fitted to a Monte Carlo sample of $\tilde S$ under $P_0$
(Section~\ref{sec:problem}, Appendix~\ref{app:mmd-asymptotics}); like any
KDE, its estimated density decays to (numerically) zero outside the support
of the fitting sample. Whenever a realised $\tilde S_t$ falls in this
region -- which happens routinely once genuine post-change evidence
accumulates, since $\tilde S_t$ then concentrates near $\delta^2$, several
null standard deviations away from $0$ -- the ratio
$\Lambda_t^\pi = \bar p_1(\tilde S_t)/g_0(\tilde S_t)$ is a division by a
value at or below machine epsilon, producing a single-window likelihood
ratio of unbounded, uninformative magnitude rather than a merely large one.
Second, even with $\Lambda_t^\pi$ well behaved, the recursion multiplies
$t$ such factors together; under a sustained alternative $\log R_t$ grows
linearly in $t$ (since $\mathbb E_1[\log\Lambda_t^\pi] > 0$), so $R_t$
itself grows geometrically and exceeds the double-precision range
($R_t > e^{\approx 709}$) after only a few dozen windows of strong signal --
well inside the horizons used in Section~\ref{sec:problem}'s experiments --
silently overflowing to \texttt{inf} and terminating the recursion.

\paragraph{Fix 1: capping the per-window log-likelihood ratio.}
To keep any single window's evidence finite and commensurable with the rest
of the path regardless of how far $\tilde S_t$ strays from the fitted
support of $g_0$, we clip
$$\log\Lambda_t^\pi \ \leftarrow\
\operatorname{clip}\bigl(\log\bar p_1(\tilde S_t) - \log g_0(\tilde S_t),\ -L,\ L\bigr)$$
for a generous constant $L$ (we use $L=15$, i.e.\ a single window
contributes a likelihood ratio of at most $e^{15}\approx 3.3\times10^{6}$).
This leaves the recursion unaffected everywhere $|\log\Lambda_t^\pi| < L$ --
in particular throughout the null regime and the bulk of the alternative
regime -- and only truncates the rare windows where the KDE floor would
otherwise inject an artefactual, arbitrarily large jump. This is a routine
safeguard for any likelihood ratio built by dividing by an estimated
density.

\paragraph{Fix 2: an exact log-space reparametrisation.}
Even with $\Lambda_t^\pi$ capped, the running product across many windows
can still overflow. Rather than track $R_t$ directly, define
$$M_t := \log(1+R_t).$$
Substituting $1+R_{t-1} = e^{M_{t-1}}$ into the recursion gives
$$1+R_t = 1 + e^{M_{t-1}}\Lambda_t^\pi
= 1+\exp\!\bigl(M_{t-1}+\log\Lambda_t^\pi\bigr),$$
so that
$${M_t = \operatorname{softplus}\!\bigl(M_{t-1}+\log\Lambda_t^\pi\bigr),
\qquad M_0 = 0,} \qquad \operatorname{softplus}(x) := \log(1+e^x),$$
which is an \emph{exact} restatement of the recursion -- not an
approximation -- since $M_t$ is by definition $\log(1+R_t)$ at every $t$.
Evaluated via the standard stabilised form
$$\operatorname{softplus}(x) = \max(x,0) + \log\bigl(1+e^{-|x|}\bigr),$$
this never overflows for any finite argument, because the exponential is
always applied to a non-positive number. The threshold-crossing rule
$\tau_A = \inf\{t : R_t \geq A\}$ becomes the equivalent, equally exact rule
$$\tau_A = \inf\{t : M_t \geq \log(1+A)\}$$
on the chain $(M_t)$, so Algorithm~\ref{alg:sr-detection} and the threshold
calibration of Appendix~\ref{app:calibration} -- which operate on the
$[0,1)$-scaled chain $\Pi_n = R_n/(1+R_n) = 1-e^{-M_n}$, itself unaffected
by the reparametrisation -- carry over unchanged; only the internal
representation used to accumulate evidence changes, and comparing to a
threshold expressed on the same log scale requires no re-derivation.

\section{Threshold Calibration: Two Free-Boundary Problems}
\label{app:calibration}

\subsection*{Preliminaries: the chain $(\Pi_n)$}

Write $\Pi_n = R_n/(1+R_n)$ for the SR statistic on the $[0,1)$ scale, and
recall the recursion $R_n = (1+R_{n-1})\Lambda_n^\pi$. In terms of $\Pi_n$
this reads
$$\Pi_n = \frac{(1+R_{n-1})\Lambda_n^\pi}{1+(1+R_{n-1})\Lambda_n^\pi}
= \frac{\dfrac{\Pi_{n-1}}{1-\Pi_{n-1}}\Lambda_n^\pi}
       {1+\dfrac{\Pi_{n-1}}{1-\Pi_{n-1}}\Lambda_n^\pi}
=: T(\Pi_{n-1}, \Lambda_n^\pi),$$
so $(\Pi_n)_{n\ge0}$ is a time-homogeneous Markov chain on $[0,1)$ driven by
the i.i.d.\ sequence $\Lambda_1^\pi, \Lambda_2^\pi,\ldots$ -- the window-indexed,
discrete-time analogue of the posterior probability process $(\pi_t)_{t\ge0}$
that \citet[\S22.0, \S24.1]{Shiryaev_Peskir} construct from the continuously
observed Wiener or Poisson process (compare $T$ above with their update rule
$\pi_t=\varphi_t/(1+\varphi_t)$, eqs.\ (22.0.8)--(22.0.9), (24.1.8)). Since
$(\Pi_n)$ moves only at integer window steps, by a random multiplicative jump
of continuous distribution at \emph{every} step, it never moves continuously
between updates: for any level $b\in(0,1)$, $\Pi_n$ almost surely
\emph{overshoots} $b$ rather than landing on it. This single fact governs the
boundary condition in both approaches below: wherever a differential
(``smooth-fit'') condition would be imposed for a diffusion
(\citealt[Thm.~22.1]{Shiryaev_Peskir}) or could still sometimes be imposed for a
jump-diffusion (their Poisson case, Theorem 24.1(i)), it is not meaningful
here, and only \emph{continuous fit} -- plain value matching -- survives, in
line with their Theorem 24.1(ii)--(iii) and the discussion of jump entrances
in \S24.3 (Figure VI.8).

\subsection*{Approach A -- Bayes risk minimisation}

Following the reduction of the Bayes risk (their (22.0.4)) to the optimal
stopping problem (their (22.0.5)) for $(\pi_t)$, the risk
$\mathcal{R}(\tau) = c\,\mathbb{E}[(\tau-\theta)^+]+\mathbb{P}(\tau<\theta)$,
re-expressed on the window scale using the SR statistic in place of the
proper-prior posterior, becomes an optimal stopping problem for $(\Pi_n)$:
$$V(x) = \inf_{n\ge0}\mathbb{E}_x\!\left[1-\Pi_n + c\sum_{k=0}^{n-1}\Pi_k\right],
\qquad \mathbb{P}_x(\Pi_0=x)=1.$$
Since $(\Pi_n)$ is a discrete-time Markov chain, the role of the infinitesimal
generator is played by the \emph{one-step} operator (cf.\ their (22.1.2) and
(24.1.16)):
$$(\mathbb{L}f)(x) = \mathbb{E}\bigl[f(T(x,\Lambda^\pi))\bigr] - f(x),
\qquad \Lambda^\pi \stackrel{d}{=}\Lambda_1^\pi.$$
Guessing (and, as in their Theorems 22.1/24.1, subsequently verifying via a
standard martingale argument) that it is optimal to stop the first time
$\Pi_n$ reaches or exceeds some level $b$,
$$\tau_A = \inf\{n\ge0 : \Pi_n\ge b\} = \inf\{n\ge0: R_n\ge A\},
\qquad A = \frac{b}{1-b},$$
the value function must satisfy, on the continuation region $\{x<b\}$,
$$(\mathbb{L}V)(x) = -cx, \qquad 0\le x<b,$$
and $V(x)=1-x$ on the stopping region $\{x\ge b\}$. By the preliminary
discussion above, only continuous fit applies at the unknown boundary:
$${(\mathbb{L}V)(x) = -cx \ \ (0\le x<b), \qquad
V(x) = 1-x \ \ (b \le x < 1), \qquad
V(b{-}) = 1-b}$$
Unlike the Wiener case (solved in closed form via the integrating factor,
their (22.1.10)--(22.1.12)) or the Poisson case (solved via the step function
(24.1.24)--(24.1.33), requiring the Gauss hypergeometric function once more
than one region is involved), the one-step operator $\mathbb{L}$ here
involves an expectation over $\Lambda^\pi$, whose density is the ratio of the
closed-form mixture $\bar{p}_1$ to the non-closed-form null density $g_0$
(Appendices~\ref{app:tartakovsky-spivak}, \ref{app:mmd-asymptotics}), so no
elementary solution to this system is available; $(V,b^*(c))$ is obtained
numerically by value iteration of the fixed-point equation above on a grid
in $[0,1)$. Repeating this for a range of $c$ traces out the achievable
delay/false-alarm frontier.

\subsection*{Approach B -- ARL-constrained calibration}

Here no cost $c$ is introduced. Instead, fix a target average run length to
false alarm $\gamma>0$ and let
$$U(x) = \mathbb{E}_x^{\infty}[\tau_A], \qquad \tau_A = \inf\{n\ge0:\Pi_n\ge b\},$$
be the expected number of further windows to threshold-crossing under $P_\infty$
(no change), starting from $\Pi_0=x$. By the same one-step decomposition as
above (now with running cost $\equiv 1$ per window rather than $c\Pi_k$, and no
minimisation -- the rule is already fixed as a threshold crossing, so this is a
linear first-passage problem, not a variational one),
$${U(x) = 1 + \mathbb{E}\bigl[U(T(x,\Lambda^\pi))\bigr] \ \ (0\le x<b), \qquad
U(x) = 0 \ \ (b\le x<1)}$$
again with $U$ merely continuous (value-matching, $U(b{-})=0$) rather than
smooth at $b$, for the same overshoot reason as above. The threshold is then
the value of $b$ (equivalently $A=b/(1-b)$) solving
$$U(0) = \gamma.$$
As with Approach A, $\Lambda^\pi$ has no elementary distribution, so $U$ and
$b(\gamma)$ are obtained numerically: simulate $(R_n)$ under $P_\infty$ for a
grid of candidate thresholds $A$, estimate $\mathbb{E}_\infty[\tau_A]$ by the
empirical mean first-passage time over repeated simulated paths, and select
$A$ matching the target $\gamma$ -- exactly as $g_0$ itself is evaluated by
simulation in Section~\ref{sec:null_dist}. By the \citet{Pollak1985} theorem, the
resulting rule is then asymptotically minimax-optimal as $\gamma\to\infty$,
with no cost parameter, and hence no implicit assumption about the frequency
of changes, ever required.

\subsection*{Remark: why $A$ itself is not universal across problems}

The exact identity underlying both approaches is worth isolating on its
own. Since $R_n$ is a $P_\infty$-martingale with $R_0 = 0$ regardless of
which mixture built $\Lambda^\pi$ -- $\mathbb E_\infty[\Lambda^\pi] =
\int \bar p_1(s)\,ds = 1$ for \emph{any} valid mixture density
$\bar p_1(\cdot;\alpha,v_1)$, exactly as $\mathbb E_0[p_1(X)/p_0(X)] =
\int p_1 = 1$ in the classical, known-alternative case -- optional
stopping applied at $\tau_A$ gives, \emph{exactly} rather than
asymptotically,
$$\mathbb{E}_\infty[\tau_A] = A + \mathbb{E}_\infty\bigl[R_{\tau_A} - A\bigr],$$
the second term being the mean \emph{overshoot} of $R_n$ past the boundary
at the moment it is crossed. Only the overshoot depends on which
$(\hat\alpha,\hat v_1)$ built $\Lambda_n^\pi$ (Appendix~\ref{app:pilot-calibration}):
when it is small relative to $A$ -- light-tailed $\Lambda^\pi$, large $A$
-- the familiar approximation $A \approx \gamma$ is recovered essentially
independently of the alternative, which is the precise sense in which
Pollak's asymptotic optimality above holds for \emph{any} alternative as
$\gamma\to\infty$. At the moderate, realistic $\gamma$ used in
Section~\ref{sec:experiments}, the overshoot is not negligible:
$\Lambda_n^\pi$ can jump by a large factor whenever a null $\tilde S_n$
lands where the fitted mixture $\bar p_1$ disagrees sharply with the true
null density $g_0$ in its tail -- the same mechanism flagged in
Appendix~\ref{app:numerical-stability} -- and the size of that
disagreement is governed by $(\hat\alpha,\hat v_1)$. Different pilot fits
therefore give different overshoot distributions and hence different $A$
for the \emph{same} target $\gamma$, even though $\gamma$ itself, being
exactly the quantity Approach B calibrates against, comes out identical
by construction across every dataset in Section~\ref{sec:experiments}.

\section{The Full Detection Procedure}
\label{app:algorithm}

This appendix collects the pieces derived through the main text --- the
frozen PF-ODE encoder (Appendix~\ref{app:bijection}), the closed-form RBF
MMD$^2$, the Tartakovsky--Spivak mixture likelihood ratio
(Appendix~\ref{app:tartakovsky-spivak}, pilot-calibrated per
Appendix~\ref{app:pilot-calibration}), and the Pollak-calibrated
Shiryaev--Roberts threshold (Appendix~\ref{app:calibration}) --- into the
single online procedure referenced throughout as
Algorithm~\ref{alg:sr-detection}, with a one-time offline stage and a
per-window online loop.

\begin{algorithm}[H]
\SetAlgoLined
\DontPrintSemicolon
\caption{Sequential PF-ODE / Shiryaev--Roberts changepoint detection}
\label{alg:sr-detection}

\KwIn{Burn-in sample $X_1,\ldots,X_{T_{\rm burn}}$; window size $w$; RBF bandwidth $\sigma$; prior rate $\alpha$; target average run length $\gamma$}
\KwOut{Alarm time $\hat\theta$}

\tcc{Offline}
Train $\hat\varepsilon_\theta$, $\mathrm{Enc}$ jointly on $X_1,\ldots,X_{T_{\rm burn}}$ (DSM loss)\;
$\mathbf h_{\rm fixed} \leftarrow \mathrm{Enc}(X_1,\ldots,X_{T_{\rm burn}})$\;
Tabulate $g_0$ by Monte Carlo under $\mathcal N(0,I)$\;
Calibrate $A$: solve $U(0)=\gamma$ (Appendix~\ref{app:calibration}, Approach B)\;
$n \leftarrow 0$;\ $R_0 \leftarrow 0$\;

\BlankLine
\tcc{Online}
\While{no alarm}{
    $n \leftarrow n+1$\;
    Read window $X_{(n-1)w+1},\ldots,X_{nw}$\;
    $Z_i \leftarrow \Phi(X_i;\mathbf h_{\rm fixed})$ for $i=1,\ldots,w$\;
    $\widehat{\mathrm{MMD}}^2_n \leftarrow$ closed-form RBF estimator on $\{Z_i\}$\;
    $\tilde S_n \leftarrow \widehat{\mathrm{MMD}}^2_n - \mu_0$\;
    $\Lambda_n^\pi \leftarrow \bar p_1(\tilde S_n)/g_0(\tilde S_n)$\;
    $R_n \leftarrow (1+R_{n-1})\,\Lambda_n^\pi$\;
    \eIf{$R_n \ge A$}{
        \Return $\hat\theta \leftarrow (n-1)w+1$\;
    }{
        continue\;
    }
}
\end{algorithm}
\hspace{5mm}
\begin{remark}[Design choices]
Several steps deserve comment. The context $\mathbf h_{\rm fixed}$ is frozen
once, after burn-in, and never updated: this is what makes
$\Phi(\cdot;\mathbf h_{\rm fixed})$ a fixed, well-defined bijection
(Appendix~\ref{app:bijection}) rather than a moving target, and why encoding
uses the deterministic PF-ODE map rather than a DDPM-style stochastic
sampler. The ``(DSM loss)'' step above fits $\hat\varepsilon_\theta$ and
$\mathrm{Enc}$ jointly to the conditional Stein score that this drift
requires; see Appendix~\ref{app:score-training} for how that gradient is
defined and estimated. Windows are non-overlapping, which is what makes
$\tilde S_1, \tilde S_2,\ldots$ i.i.d.\ within a regime -- the condition the
SR recursion needs: because $\mathbf h_{\rm fixed}$ is frozen, every $X_i$ in
the online phase is drawn i.i.d.\ from $p_0(\cdot\mid\mathbf h_{\rm fixed})$
(or $p_1(\cdot\mid\mathbf h_{\rm fixed})$ after the change), so non-overlapping
windows partition this i.i.d.\ sequence into disjoint blocks; disjoint blocks
of independent variables are themselves independent, and equal-sized blocks of
the same i.i.d.\ law are identically distributed, so passing each block
through the same fixed map $\tilde S_n = \widehat{\mathrm{MMD}}^2_n - \mu_0$
yields an i.i.d.\ sequence $\tilde S_1, \tilde S_2,\ldots$. \emph{Overlapping}
(sliding) windows break this disjointness and hence the i.i.d.\ property --
exactly the caveat noted in Section~\ref{sec:experiments} for the sliding
construction actually used there. The null density $g_0$
is tabulated once offline, since it depends only on $(\sigma,w,d)$ and not on
the incoming data stream; $\bar p_1$ is the closed-form Tartakovsky--Spivak
mixture of Appendix~\ref{app:tartakovsky-spivak}. The threshold $A$ is
calibrated via Approach B (Appendix~\ref{app:calibration}) rather than
Approach A: it requires only an operational ARL budget $\gamma$, with no cost
parameter $c$ and no assumption about how often changes occur.
\end{remark}

\hspace{5mm}
\begin{remark}[Cost and scope]
Encoding each $X_i$ requires solving the PF-ODE and is the dominant cost;
everything downstream -- $\widehat{\mathrm{MMD}}^2_n$, $\Lambda_n^\pi$,
$R_n$ -- is closed-form arithmetic on $w$ numbers. Algorithm~\ref{alg:sr-detection}
targets a single changepoint, matching the model of Section~\ref{sec:problem}
($\theta$ singular); for continuous monitoring across multiple regime changes,
the natural extension is to restart at the alarm (re-estimate
$\mathbf h_{\rm fixed}$ from a fresh burn-in window, reset $R\leftarrow0$),
which we do not analyse here.
\end{remark}

\section{Training the PF-ODE Encoder: Denoising Score Matching and the Stein Identity}
\label{app:score-training}

Appendix~\ref{app:bijection} treats $\Phi(\cdot;\mathbf h)$ as a black box: any injective PF-ODE flow suffices for the change-detection argument, regardless of how its drift is obtained. This appendix opens that black box, and is the point in the paper where machine learning actually does the work: every quantity derived in Sections~\ref{sec:mmd}--\ref{sec:baselines} and Appendices~\ref{app:BC-derivation}--\ref{app:calibration} is closed-form or Monte-Carlo-tabulated classical probability theory, applied on top of a single learned object -- the conditional \emph{Stein score} of the noised pre-change density. This appendix defines that object, explains why the change-point problem of Section~\ref{sec:problem} cannot supply it any other way, and derives the training objective (the ``DSM loss'' of Algorithm~\ref{alg:sr-detection}) used to fit a network to it.

\subsection*{The forward noising SDE and the score that generates its reverse}

Fix the frozen context $\mathbf h=\mathbf h_{\rm fixed}$ and, conditionally on it, augment each pre-change observation $X_0\sim p_0(\cdot\mid\mathbf h)$ with an auxiliary diffusion time $\tau\in[0,T]$\footnote{We reserve $t$, as throughout the rest of the paper, for the outer sequential/window index of Section~\ref{sec:problem}; discretised, $\tau$ becomes the step index of Appendix~\ref{app:architectures}'s noise schedule and the DDIM/PF-ODE step counts of Table~\ref{tab:architectures}.} running a forward It\^o SDE
$$dX_\tau = f(X_\tau,\tau)\,d\tau + g(\tau)\,dW_\tau, \qquad X_0\sim p_0(\cdot\mid\mathbf h),$$
whose marginal density $p_\tau(\cdot\mid\mathbf h)$ interpolates between the unknown $p_0(\cdot\mid\mathbf h)$ at $\tau=0$ and, for a suitably designed $(f,g)$, a law indistinguishable from $\mathcal N(0,I)$ at $\tau=T$. \citet{Anderson1982} shows this process can be run backward in time with the \emph{same} marginals $p_\tau(\cdot\mid\mathbf h)$ at every $\tau$, via the reverse-time SDE
$$dX_\tau = \bigl[f(X_\tau,\tau) - g(\tau)^2\nabla_x\log p_\tau(X_\tau\mid\mathbf h)\bigr]\,d\tau + g(\tau)\,d\bar W_\tau,$$
and \citet{SongEtAl2021SDE} (Thm.~1) show a deterministic ODE sharing the very same marginals exists, obtained by discarding the stochastic term while halving the score coefficient:
$${dX_\tau = \Bigl[f(X_\tau,\tau) - \tfrac12 g(\tau)^2\,\nabla_x\log p_\tau(X_\tau\mid\mathbf h)\Bigr]\,d\tau.}$$
This is exactly the drift in the Remark of Appendix~\ref{app:bijection}, whose Lipschitzness in $x$ is what makes the flow map $\Phi(\cdot;\mathbf h)$ -- solving this ODE from $\tau=0$ to $\tau=T$ -- injective. The one term in that drift not given in closed form is $\nabla_x\log p_\tau(x\mid\mathbf h)$, the \emph{Stein score} of the noised conditional density at time $\tau$: since $p_\tau(\cdot\mid\mathbf h)$ is the law of $p_0(\cdot\mid\mathbf h)$ pushed through $\tau$ steps of noising, it inherits the same lack of closed form as $p_0$ itself (Section~\ref{sec:problem}) at every $\tau<T$ -- diffusing an intractable density does not make it tractable, it only smooths it towards a known Gaussian endpoint. Fitting a network to this score, for every $\tau$ and every $\mathbf h$ the encoder will see, is the entire learning problem this paper relies on.

Specialising to the variance-preserving schedule used throughout (the linear $\beta(\tau)$ schedule of Appendix~\ref{app:architectures}), $f(x,\tau)=-\tfrac12\beta(\tau)x$, $g(\tau)=\sqrt{\beta(\tau)}$, so that the forward SDE has the closed-form Gaussian transition
$${X_\tau \mid X_0=x_0 \;\sim\; \mathcal N\bigl(\mu_\tau x_0,\ \sigma_\tau^2 I\bigr), \qquad \mu_\tau=\exp\Bigl(-\tfrac12\!\int_0^\tau\!\beta(u)\,du\Bigr), \quad \sigma_\tau^2=1-\mu_\tau^2,}$$
matching the $\mu_t,\sigma_t$ of Appendix~\ref{app:architectures}'s $v$-prediction target and recovering $\mu_\tau\to0,\ \sigma_\tau\to1$ -- the $\mathcal N(0,I)$ endpoint Section~\ref{sec:problem} maps pre-change data onto -- as $\tau\to T$.

\subsection*{Why ``Stein'' score, and why it cannot be matched directly}

For a smooth density $p$ with sufficient tail decay and any smooth $\phi:\mathbb R^d\to\mathbb R^d$ vanishing at infinity, integration by parts gives \citet{Stein1981}'s identity
$$\mathbb E_{X\sim p}\bigl[\nabla_x\log p(X)^\top\phi(X) + \nabla_x\!\cdot\phi(X)\bigr] = 0,$$
which is why $\nabla_x\log p$ is called the \emph{Stein score} of $p$ -- to distinguish it from the statistical score $\nabla_\theta\log p_\theta$ with respect to a parameter -- and why Stein's identity, rather than direct comparison to a known target, is the mechanism by which a network can be fit to it \citep{LiuLeeJordan2016,SongErmon2019}. Applying it with $\phi=s_\theta$, the naive objective $\mathbb E_p\|s_\theta(x)-\nabla_x\log p(x)\|^2$ -- which needs the unknown target $\nabla_x\log p$ itself -- is shown by \citet{Hyvarinen2005} to equal, up to an additive constant not depending on $\theta$,
$$J(\theta) = \mathbb E_{X\sim p}\Bigl[\operatorname{tr}\bigl(\nabla_x s_\theta(X)\bigr) + \tfrac12\|s_\theta(X)\|^2\Bigr],$$
an objective that needs only samples from $p$ -- exactly what the pre-change burn-in of Section~\ref{sec:problem} supplies -- and never the density itself. This \emph{implicit} score-matching objective is, however, not what Algorithm~\ref{alg:sr-detection} trains with: $\operatorname{tr}(\nabla_x s_\theta)$ requires the full Jacobian of the network, one backward pass per output coordinate, which is infeasible at the dimension used in Section~\ref{sec:real-data} ($d=256$).

\subsection*{Denoising score matching: replacing the marginal score with a tractable conditional one}

\citet{Vincent2011} sidesteps the Jacobian trace entirely by matching, instead of the marginal score $\nabla_{x_\tau}\log p_\tau(x_\tau\mid\mathbf h)$, the score of the \emph{conditional} transition kernel $p_{\tau\mid0}(x_\tau\mid x_0)$ derived above, which is Gaussian and hence known in closed form:
$$\nabla_{x_\tau}\log p_{\tau\mid0}(x_\tau\mid x_0) = -\frac{x_\tau-\mu_\tau x_0}{\sigma_\tau^2} = -\frac{\eta}{\sigma_\tau}, \qquad X_\tau=\mu_\tau X_0+\sigma_\tau\eta,\ \ \eta\sim\mathcal N(0,I).$$
These two targets share the same minimiser because the marginal score is itself the posterior mean of the conditional one. Writing $p_\tau(x_\tau\mid\mathbf h)=\int p_{\tau\mid0}(x_\tau\mid x_0)\,p_0(x_0\mid\mathbf h)\,dx_0$ and differentiating under the integral sign,
$$\nabla_{x_\tau}p_\tau(x_\tau\mid\mathbf h) = \int p_{\tau\mid0}(x_\tau\mid x_0)\,\nabla_{x_\tau}\log p_{\tau\mid0}(x_\tau\mid x_0)\,p_0(x_0\mid\mathbf h)\,dx_0,$$
and dividing both sides by $p_\tau(x_\tau\mid\mathbf h)$ turns the integrand's first two factors, by Bayes' rule, into the posterior $p_{0\mid\tau}(x_0\mid x_\tau,\mathbf h)$, giving the identity underlying the whole construction:
$${\nabla_{x_\tau}\log p_\tau(x_\tau\mid\mathbf h) = \mathbb E_{X_0\mid X_\tau=x_\tau,\,\mathbf h}\bigl[\nabla_{x_\tau}\log p_{\tau\mid0}(x_\tau\mid X_0)\bigr].}$$
Expanding both squared objectives and using this identity to equate their cross-terms (the tower property turns an expectation of $s_\theta^\top\nabla\log p_\tau$ under the marginal into the same expectation of $s_\theta^\top\nabla\log p_{\tau\mid0}$ under the joint) shows the two share the same $\theta$-independent remainder, and hence the same minimiser:
$$\argminC_\theta\ \mathbb E_{X_\tau\mid\mathbf h}\bigl\|s_\theta(X_\tau,\tau;\mathbf h)-\nabla_{x_\tau}\log p_\tau(X_\tau\mid\mathbf h)\bigr\|^2 = \argminC_\theta\ \mathbb E_{X_0,X_\tau\mid\mathbf h}\bigl\|s_\theta(X_\tau,\tau;\mathbf h)-\nabla_{x_\tau}\log p_{\tau\mid0}(X_\tau\mid X_0)\bigr\|^2.$$
Substituting the closed form of the conditional score and reparametrising $s_\theta=-\hat\varepsilon_\theta/\sigma_\tau$ turns score matching into noise prediction -- the DSM loss cited by Algorithm~\ref{alg:sr-detection}:
$${\mathcal L_{\rm DSM}(\theta) = \mathbb E_{\tau\sim\mathcal U[0,T],\;X_0\sim p_0(\cdot\mid\mathbf h),\;\eta\sim\mathcal N(0,I)}\Bigl[\lambda(\tau)\,\bigl\|\hat\varepsilon_\theta(\mu_\tau X_0+\sigma_\tau\eta,\ \tau;\ \mathbf h) - \eta\bigr\|^2\Bigr],}$$
with $\lambda(\tau)$ a per-$\tau$ reweighting (the $v$-prediction of Appendix~\ref{app:architectures} is one such reweighting, chosen for numerical conditioning rather than a different objective in substance) and no Jacobian trace ever computed: $\eta$ is simulated directly, and $\hat\varepsilon_\theta$ is trained by ordinary backpropagation of a mean-squared error.

\subsection*{Where the context enters, and what is actually being learned}

Algorithm~\ref{alg:sr-detection} trains $\hat\varepsilon_\theta$ and $\mathrm{Enc}$ \emph{jointly} against $\mathcal L_{\rm DSM}$: the context $\mathbf h_{t-1}=\mathrm{Enc}(X_1,\ldots,X_{t-1})$ is not a hand-designed summary statistic but is itself learned, end-to-end, to be whatever function of the history makes the conditional score $\nabla_x\log p_\tau(x\mid\mathbf h)$ most accurately estimable by $\hat\varepsilon_\theta$ jointly across every $\tau$. This is the one place in the entire pipeline where a network is fit to data at all: the closed-form RBF $\mathrm{MMD}^2$ (Section~\ref{sec:mmd}), the Tartakovsky--Spivak mixture (Appendix~\ref{app:tartakovsky-spivak}), and the Pollak-calibrated threshold (Appendix~\ref{app:calibration}) are all exact or Monte-Carlo-tabulated probability theory built on top of the single frozen map $\Phi(\cdot;\mathbf h_{\rm fixed})$ that this training produces. Machine learning's role in solving the change-point problem of Section~\ref{sec:problem} is exactly to supply this one otherwise-inaccessible quantity -- the conditional Stein score of a density family with no closed form -- which classical likelihood-ratio methods needed in exactly the same place (the ratio $p_1/p_0$ of Section~\ref{sec:problem}) and had no way to obtain.

\begin{remark}[The failure mode already seen in Section~\ref{sec:real-data}]
The identity above only guarantees that $\hat\varepsilon_\theta$ recovers the true score \emph{at} $\mathcal L_{\rm DSM}$'s global minimum. A network stopped short of it -- as in the reduced MNIST recipe of Section~\ref{sec:real-data} (Table~\ref{tab:architectures}) -- yields a biased score estimate at every $\tau$, and the resulting empirical pushforward $\Phi(\cdot;\mathbf h_{\rm fixed})_\#p_0$ is correspondingly not exactly $\mathcal N(0,I)$: this is the same shortfall reported there as a per-dimension latent variance of $0.845$ rather than $1$. Every guarantee derived from Section~\ref{sec:mmd} onward is conditional on this training objective having converged; nothing in the downstream MMD/Shiryaev--Roberts machinery can compensate for it not having done so.
\end{remark}

\section{Model Architectures Used in the Experiments}
\label{app:architectures}

Section~\ref{sec:problem} leaves $\mathrm{Enc}$ and the denoiser
$\hat\varepsilon_\theta$ unspecified beyond the causality of the former
and the injectivity of the resulting PF-ODE (Appendix~\ref{app:bijection});
neither claim depends on the concrete networks below. This appendix fixes
the two instantiations actually used, in Section~\ref{sec:experiments} and
Section~\ref{sec:real-data} respectively -- both trained once, with no
architecture search.

\paragraph{History encoder.} A single-layer GRU~\citep{ChoEtAl2014GRU} in
both cases. For the $\mathbb R^2$ pairs it consumes each raw observation
$X_i\in\mathbb R^2$ directly; for MNIST each $16\times16$ frame is first
mapped to a $64$-dimensional embedding by a two-layer MLP
($256\to256\to64$, SiLU~\citep{ElfwingEtAl2018SiLU}) before the GRU. In
both cases $\mathbf h_{t-1}$ is the GRU's hidden state after processing
$X_1,\ldots,X_{t-1}$ (zero at $t=0$), which is exactly the causal,
one-step-shifted encoding Section~\ref{sec:problem} requires of
$\mathrm{Enc}$.

\paragraph{Denoiser.} Both variants are pre-norm residual
stacks~\citep{BaKirosHinton2016LayerNorm} predicting
$v=\mu_t\varepsilon-\sigma_t x_0$~\citep{SalimansHo2022} rather than
$\varepsilon$ directly, so
that $x_0$ and $\varepsilon$ can be recovered at inference without dividing
by $\mu_t$ -- this is what keeps the DDIM inversion~\citep{SongEtAl2021DDIM} of
Section~\ref{sec:problem} well-conditioned at the large-$t$, small-$\mu_t$
end of the schedule, and is the network-level analogue of the
log-space reparametrization Appendix~\ref{app:numerical-stability} uses to
keep the SR recursion itself from overflowing. They differ only in how the
context $\mathbf h_{t-1}$ enters the residual block:
\begin{itemize}
    \item \emph{2D pairs (FiLM):} the context is projected once to a
    per-block $(\mathrm{scale}, \mathrm{shift})$ pair, applied after the
    block's own linear-SiLU transform: $h \leftarrow h +
    \big[\mathrm{SiLU}(\mathrm{Linear}(\mathrm{LayerNorm}(h)))\odot(1+\mathrm{scale})+\mathrm{shift}\big]$.
    \item \emph{MNIST (concatenation):} $x$, a sinusoidal diffusion-step
    embedding, and $\mathbf h_{t-1}$ are concatenated and projected once to
    the working width before the residual stack (no per-block FiLM).
    Section~\ref{sec:real-data}'s illustration additionally zeroes the
    context with probability $0.10$ during training (classifier-free
    guidance), so the same frozen context that drives detection can also
    drive the conditional generation of Figure~\ref{fig:mnist}; the
    corrected recipe used for Table~\ref{tab:baselines}
    (Table~\ref{tab:architectures}) drops this dropout entirely; see
    Section~\ref{sec:baselines} for why.
\end{itemize}

\begin{table}[h]
\centering
\small
\begin{tabular}{lcc}
\toprule
& \textbf{2D pairs (\S\ref{sec:experiments})} & \textbf{MNIST (\S\ref{sec:real-data})} \\
\midrule
observation dim $d$         & $2$                           & $256$ ($16\times16$) \\
history encoder              & GRU, hidden $64$              & MLP embed ($64$) $\to$ GRU, hidden $512$\textsuperscript{$\dagger$} \\
denoiser conditioning        & FiLM                          & concatenation, CFG dropout\textsuperscript{$\dagger$} \\
denoiser width / \# blocks   & $128$ / $4$                   & $256$ / $8$\textsuperscript{$\dagger$} \\
diffusion steps $T$          & $500$                         & $1000$ \\
$\beta$ schedule              & linear, $10^{-4}\!\to\!0.02$  & linear, $10^{-4}\!\to\!0.02$ \\
training series length        & $256$                         & $125$ \\
series per step               & $32$                          & $64$ \\
training steps                & $3{,}000$                     & $100{,}000$\textsuperscript{$\dagger$} \\
LR schedule                   & cosine, $3\!\times\!10^{-4}\!\to\!3\!\times\!10^{-6}$, $100$-step warmup
                               & cosine, $10^{-4}\!\to\!3\!\times\!10^{-6}$, $1000$-step warmup\textsuperscript{$\dagger$} \\
gradient clip (max norm)      & --                            & $1.0$\textsuperscript{$\dagger$} \\
EMA decay                     & $0.995$                       & $0.999$ \\
DDIM / PF-ODE steps            & $499$ (full)                  & $999$ (full)\textsuperscript{$\dagger$} \\
\bottomrule
\end{tabular}
\caption{Architecture and training hyperparameters for the two model
families used throughout the paper. $\dagger$: the MNIST column shows the
corrected recipe used to produce Table~\ref{tab:baselines} and the
convergence study of Section~\ref{sec:baselines} -- hidden width $512$,
$8$ denoiser blocks, learning rate $10^{-4}$ with gradient-norm clipping
at $1$, no CFG dropout, $120{,}000$ steps, full $999$-step DDIM budget.
Section~\ref{sec:real-data}'s single-trajectory illustration
(Figure~\ref{fig:mnist}) instead uses the original, smaller recipe
disclosed there for tractability: hidden width $256$, $4$ denoiser
blocks, learning rate $3\times10^{-4}$ (unclipped), no CFG dropout,
$6{,}000$ steps, and only $100$ DDIM steps; see the caveat stated there
and the discussion in Section~\ref{sec:baselines} for why the two
recipes diverged and what the correction changed.}
\label{tab:architectures}
\end{table}

\section{Baseline Detectors and Equal-Footing Evaluation}
\label{app:baselines}

This appendix specifies the three baselines compared against in the main
text (end of Section~\ref{sec:real-data}) and the Monte Carlo protocol
used to produce Table~\ref{tab:baselines}.

\subsection*{The equal-footing rule}
Section~\ref{sec:problem} assumes only that a burn-in segment of the
monitored series is known to be drawn from $p_0$; nothing else about
$p_0$ or $p_1$ is assumed known. Every baseline below is held to exactly
this standard: none of them is ever given a fresh draw from the true
$p_0$ generator for null calibration. Where a baseline needs a reference
pool or a covariance estimate, it is built from -- or bootstrapped from --
the burn-in segment of the \emph{same} series it monitors, i.e.\ exactly
the same $w$ observations "ours" itself would have access to at that
point. The one exception, applied identically to "ours" as to A and B, is
\emph{pilot calibration}: choosing $(\hat\alpha,\hat v_1)$
(Appendix~\ref{app:pilot-calibration}) still uses an idealised $p_1$
pilot pool throughout the paper, since pilot calibration only sets
detection \emph{speed}, not false-alarm validity, and this idealisation
is unchanged from how "ours" is evaluated everywhere else.

\subsection*{Baseline A: raw two-sample $\widehat{\mathrm{MMD}}^2$, online self-normalised}
Baseline A removes the PF-ODE encoder entirely and computes the
two-sample $\widehat{\mathrm{MMD}}^2$ (RBF kernel, median-heuristic
bandwidth from the burn-in segment) between the current window and a
fixed reference pool -- the burn-in segment itself. Unlike the statistic
against the known $\mathcal N(0,I)$ reference that "ours" enjoys, this
raw-space two-sample statistic's null law has \emph{no} known closed form, not even
asymptotically: Appendix~\ref{app:mmd-asymptotics}'s own derivation shows
$\widehat{\mathrm{MMD}}^2$ is generically a weighted sum of centred
$\chi^2(1)$ variables, not Gaussian, even when the reference law
\emph{is} $\mathcal N(0,I)$. Rather than fitting a KDE to data the problem
setup does not grant access to, baseline A standardises online: at each
step $t$ the raw statistic $u_t$ is centred and scaled by the mean/std of
a trailing causal buffer of its own $60$ most recent values (seeded from
a bootstrap of the burn-in segment), and the standardised value is fed
into the \emph{same} Tartakovsky--Spivak mixture likelihood
ratio of Appendix~\ref{app:tartakovsky-spivak} used everywhere else in the paper, just with
$g_0=\varphi$ (the standard normal density) in place of the exact/KDE
null. This is an explicit, honest approximation -- exactly the price of
not having the injectivity theorem -- rather than one hidden inside a
density estimate fit to data that would not really be available.
Threshold calibration bootstraps synthetic null paths by resampling
individual burn-in observations with replacement (a valid bootstrap of
$p_0$ under the i.i.d.\ model), never drawing from the true generator.

\subsection*{Baseline B: mean-embedding on the same latents}
Baseline B keeps the PF-ODE encoder but replaces
$\widehat{\mathrm{MMD}}^2$ with the cruder statistic
$\|\overline Z_w\|^2$, the squared norm of the window mean of the
\emph{same} latents "ours" uses. Under $H_0$, $Z\sim\mathcal N(0,I_d)$
exactly (Lemma~\ref{lem:pushforward-injective}), so
$\overline Z_w\sim\mathcal N(0,I_d/w)$ and $w\|\overline Z_w\|^2\sim
\chi^2_d$ \emph{exactly} -- the same theorem "ours" relies on, applied to
a linear-kernel analogue of Table~\ref{tab:kernels}'s row-by-row
comparison. B isolates what the mixture-SR machinery buys beyond a naive
mean shift once the encoder has already done its job; because none of the
three synthetic pairs are constructed as pure mean shifts (rotation,
mode-count change, and a radial redistribution respectively all leave the
mean approximately unchanged), B is expected -- and, in
Table~\ref{tab:baselines}, found -- to be nearly blind to all of them.

\subsection*{Baseline C: raw Hotelling's $T^2$ + CUSUM}
Baseline C uses no encoder and no MMD at all: it computes Hotelling's
$T^2$~\citep{Hotelling1931}, the (diagonal-covariance) squared
Mahalanobis distance of the window mean from a burn-in-estimated
reference mean,
$T_t = w\sum_{j=1}^d(\overline X_{w,j}-\hat\mu_{0,j})^2/\hat\sigma_{0,j}^2$,
with $\hat\mu_0,\hat\sigma_0^2$ estimated from the burn-in segment (a
diagonal rather than full covariance, since a full $d\times d$ estimate
from only $w$ burn-in observations is singular whenever $d>w$). By the
CLT, $T_t$ is asymptotically $\chi^2_d$ under $H_0$ -- unlike A, this
\emph{is} a known, closed-form null, so the threshold comes directly from
simulating a CUSUM fed by i.i.d.\ $\chi^2_d$ increments with slack $k=d$
(the null mean), no data of any kind required for calibration itself:
$g_t=\max(0,\,g_{t-1}+T_t-k)$, alarm at the first $t$ with $g_t\ge h^*$.
This is deliberately the simplest thing a practitioner reaches for first
-- and, being asymptotic rather than exact, its calibration is only as
good as the CLT approximation at the chosen window length $w$, which
Table~\ref{tab:baselines} shows degrading badly whenever $p_0$ itself is
far from Gaussian at that $w$ (the two-component mixtures here).

\subsection*{Monte Carlo protocol}
For each experiment, all four detectors are calibrated
\emph{once}: a single burn-in sample of $w$ observations from $p_0$ is
drawn, pilot pools are drawn from $p_1$ exactly as in
Appendix~\ref{app:pilot-calibration}, and each method's threshold is set
from its own null model at the shared budget $P_{\rm FA}\le5\%$ -- Monte
Carlo simulation under the exact $\mathcal N(0,I)$ null for ours/B,
bootstrap of the burn-in for A, and the closed-form $\chi^2_d$-CUSUM
construction for C. This mirrors how a threshold is actually used in
practice: set once from a false-alarm budget, then applied to every
future monitoring episode, not re-derived per episode. Holding the
threshold fixed, $1{,}000$ independent test series of length $L$ are then
drawn, each with a changepoint $\tau$ sampled uniformly at random in
$[w,L)$ -- i.e.\ never before the burn-in window has closed, consistent
with the only assumption Section~\ref{sec:problem} licenses -- and every
method's fixed threshold is run against every series.

\end{document}